\newtheorem{proposition}{Proposition}
\setlist[itemize]{leftmargin=0.8 cm}
\newcommand{\sys}{\textsc{GALOT}\xspace}
\title{GALOT: Generative Active Learning via Optimizable Zero-shot Text-to-image Generation}
\author{%
  Hanbin Hong \\
  University of Connecticut\\
  Storrs, CT 06269 \\
  \And
  Shenao Yan \\
  University of Connecticut\\
  Storrs, CT 06269 \\
  \And
  Shuya Feng \\
  University of Connecticut\\
  Storrs, CT 06269 \\
  \And
  Yan Yan \\
  Illinois Institute of Technology\\
  Chicago, IL 60616 \\
  \And
  Yuan Hong \\
  University of Connecticut\\
  Storrs, CT 06269 \\
}
\begin{document}

\maketitle

\begin{abstract}
Active Learning (AL) represents a crucial methodology within machine learning, emphasizing the identification and utilization of the most informative samples for efficient model training. However, a significant challenge of AL is its dependence on the limited labeled data samples and data distribution, resulting in limited performance. To address this limitation, this paper integrates the zero-shot text-to-image (T2I) synthesis and active learning by designing a novel framework that can efficiently train a machine learning (ML) model sorely using the text description. Specifically, we leverage the AL criteria to optimize the text inputs for generating more informative and diverse data samples, annotated by the pseudo-label crafted from text, then served as a synthetic dataset for active learning. This approach reduces the cost of data collection and annotation while increasing the efficiency of model training by providing informative training samples, enabling a novel end-to-end ML task from text description to vision models. Through comprehensive evaluations, our framework demonstrates consistent and significant improvements over traditional AL methods. Codes will be released upon acceptance.
\end{abstract}

\section{Introduction}
\label{sec:Introduction}

\begin{wrapfigure}{r}{0.50\textwidth}
    \centering
    \vspace{-8pt}
    \includegraphics[width=\linewidth]{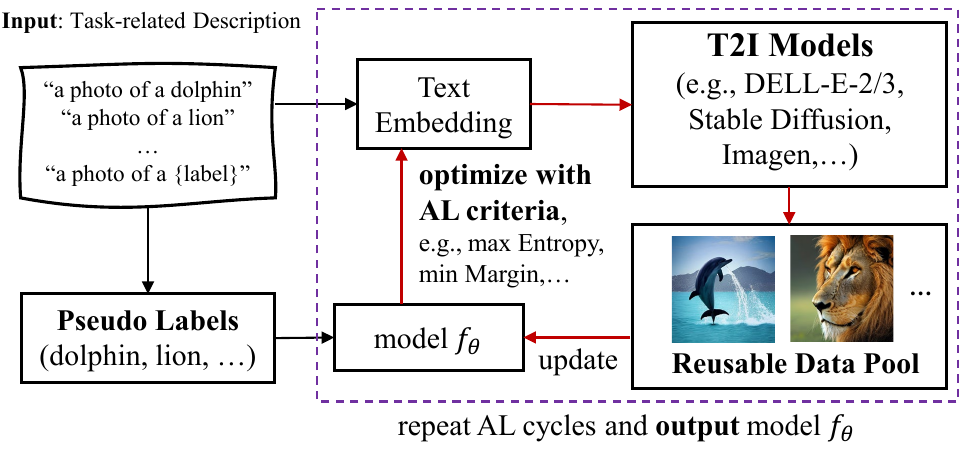}
    \vspace{-14pt}
    \caption{Overview of GALOT. The task-related text is first converted into the text embedding. Then, it iteratively executes 1) optimizing the text embedding according to the AL criteria based on the current model's output, 2) generating data samples with optimized text embedding, and 3) training the model with generated data and pseudo label. \sys train the vision model from text inputs.}
    \label{fig:overview}
    \vspace{-18pt}
\end{wrapfigure}


Active learning is a pivotal technique in machine learning that focuses on selecting and using the most informative samples from a large dataset to train models efficiently \cite{ALSurvey,ALsurvey2022}. This method significantly reduces the need for labeled data, which lowers labeling costs and speeds up training. However, despite its successes, current active learning strategies are often confined to predetermined, limited data distributions. This confinement limits the exploration of varied and potentially more insightful data points \cite{GAAL,BGAL,LADA}.


Zero-shot text-to-image synthesis models like Stable Diffusion, DELL-E-2, and Imagen provide a novel approach to generating out-of-distribution data. By transforming textual descriptions into visual representations, these models can create expansive datasets of unlabeled images \cite{StableDiffusion, DELL-E-2, Imagen}. They utilize advanced language models to explore a nearly unlimited embedding space, which is crucial for enhancing Active Learning (AL) strategies \cite{DiffusionSurvey2022, DiffusionSurvey2023}.


Nevertheless, despite its potential, not all text embeddings yield informative samples, highlighting the need for a selective strategy in choosing and optimizing text inputs. 
Most of text embeddings might be irrelevant to a specific task, such as utilizing medical images in a cat-dog classification. Moreover, relevant synthetic images may not invariably contribute meaningful information to the learning process \cite{GAAL}. To overcome these challenges, we narrow the scope of text down to task-relevant classes and refine text embedding to produce inherently informative samples (see \autoref{fig:overview}).

The subsequent challenge emerges in the form of annotating the synthetically generated images, a process that traditionally demands substantial manual efforts and resources. Advancements in pre-trained text-to-image models provide a valuable solution. By generating images that accurately reflect their source text, we can assign ``pseudo labels'' to these images. This approach eliminates the need for manual labeling, thereby reducing labeling costs. Our experiments confirm the high accuracy of these pseudo labels, as detailed in Section \ref{sec:Performance of Pseudo-labeling via Human Annotation}.

Embracing zero-shot text-to-image generation and addressing the identified challenges, our work introduces a novel active learning framework: \underline{G}enerative \underline{A}ctive \underline{L}earning via \underline{O}ptimizable Zero-shot \underline{T}ext-to-image Generation (\sys). \sys offers significant advantages over traditional active learning methods:



\noindent\textbf{Data and Annotation Efficiency}. By utilizing synthetic data and pseudo-labels, \sys significantly reduces the need for annotated data. Remarkably, \sys enables training vision models directly from text inputs, pioneering a \emph{text-to-model} approach (see \autoref{fig:overview} for illustration). Utilizing accessible online text-to-image models, such as DELL-E-3 \cite{DELL-E-3}, lowers barriers for users, allowing them to efficiently train vision models for diverse applications.
  
  
\noindent \textbf{Universally-enhanced Learning through Diverse Data Sources}. The unlimited variety of text inputs offers a rich source of potentially informative examples. When combined with the active learning strategies, this diversity consistently boosts learning efficiency, as validated in our experiments (Section \ref{sec:Experiment}).


\noindent \textbf{Data Reusability and Model Transferability}. Despite the initial computational investment, the synthetic dataset can be repurposed across various vision models, often enhancing learning performance without data re-generation (refer to Section \ref{sec: Dataset Reuse} for experimental insights).
    


Therefore, in this paper, we made the following primary contributions: 

\vspace{-0.08in}

\begin{itemize}
     
\item To our best knowledge, we propose the first paradigm of ``\emph{Generative Text-to-Image Active Learning}'' that harmonizes active learning with zero-shot text-to-image synthesis. This approach generates informative dataset using textual inputs for training vision models, pioneering a text-to-model approach.

\vspace{-0.05in}

\item We propose a novel algorithm that practically refines text inputs and optimizes text embeddings. It leverages the gradients of diffusion models to update the text embedding for generating informative, task-specific data samples.

\vspace{-0.05in}

\item We conduct comprehensive experiments on three widely-used datasets under various settings to demonstrate the effectiveness of \sys. The experimental results, benchmarked with state-of-the-art (SOTA) AL methods, substantiate the efficacy of our proposed generative active learning framework in different aspects.



\end{itemize}


\section{Related Works}
\label{sec:RelatedWorks}


Active learning (AL) is a machine learning strategy aimed at improving model accuracy with fewer labeled instances by allowing the model to select the most informative data points for annotation. It is categorized into three types \cite{ALSurvey}: 1) Pool-based sampling, where the model selects key instances from a large pool of unlabeled data for labeling \cite{ALSurvey,UncertaintyDropout,CoreSet,LLAL}. 2) Stream-based selective sampling, where instances are evaluated sequentially, and the model determines whether to request labels from an oracle \cite{SSSfirstwork,ALDS,LAAL}. 3) Membership query synthesis, where the model generates synthetic instances to be labeled by the oracle \cite{MQSfirstwork,GAAL,BGAL,ASAL}.
Our approach merges membership query synthesis with pool-based sampling, generating data guided by existing instances.



\noindent \textbf{Pseudo-labeling}. Pseudo-labeling uses model predictions to label unlabeled data, enhancing training efficiency. However, predictions can be inaccurate, especially in undertrained models, potentially introducing errors into the training set \cite{CEAL, PseudoLabel}. Our approach differs by utilizing off-the-shelf text-to-image (T2I) models to generate accurate pseudo-labels directly from text inputs.


\noindent \textbf{Data Augmentation and Generative Active Learning}. Data augmentation involves creating variations of labeled samples, such as rotations and flips, to improve model generalization and performance \cite{FlipDA}. Techniques also include integrating adversarial examples into the training set, enlarging it without additional labeling costs \cite{PseudoLabel}. However, these augmented data instances may not necessarily be informative. Generative Active Learning (GAL) enhances traditional active learning by using generative models like GANs to create synthetic queries for annotation. GAAL \cite{GAAL} pioneered this by using GANs to synthesize samples for active learning queries. BGADL \cite{BGAL} trains a GAN on selected labeled data, then integrates these generated samples into the active learning process without additional annotation costs. In contrast, ASAL \cite{ASAL} uses GANs to generate informative, though not necessarily realistic, samples and then matches them with similar instances from an unlabeled pool. Despite these innovations, studies show that GAL does not consistently match the effectiveness of traditional active learning methods \cite{GAAL,ASAL} and is often limited to specific machine learning models like Bayesian neural networks \cite{BGAL} or Support Vector Machines \cite{GAAL}. A fundamental issue is that GANs, trained on the distribution of the unlabeled pool, often produce lower-quality data that does not introduce novel, information-rich instances.

\noindent \textbf{Text-to-image Generation}. T2I synthesis translates textual descriptions into visual representations, with Reed et al. \cite{FirstGANT2I} marking a significant initial advancement by integrating textual semantics into the visual generative process. Subsequent improvements, such as stacked \cite{StackGAN}, attentional \cite{AttnGAN}, and controllable GANs \cite{ControlGAN}, enhanced image resolution and quality, though challenges like mode collapse persist \cite{GAN}. Alternative approaches using autoregressive models like DELL-E, Cogview, and Nvwa \cite{DELL-E,Cogview,Nvwa} also demonstrate effective image synthesis.

A major breakthrough in T2I has been achieved with diffusion models, which convert noise into detailed images through a denoising process \cite{DDPM,DPM}. Models like Stable Diffusion, DALL-E-2/3, and Imagen \cite{StableDiffusion,DELL-E-2,DELL-E-3,Imagen} excel in generating high-definition, realistic images and show robust zero-shot capabilities, enabling the creation of accurate images from new textual prompts. Despite their high computational demands, these models have revolutionized data synthesis and augmentation \cite{TTIDA,effectDA}, providing a foundation for generating diverse image datasets from text.

\section{Generative AL via Conditional Diffusion Models}
\label{sec:Methodology}




\begin{wrapfigure}{r}{0.50\textwidth}
    \centering
    \vspace{-8pt}
    \includegraphics[width=\linewidth]{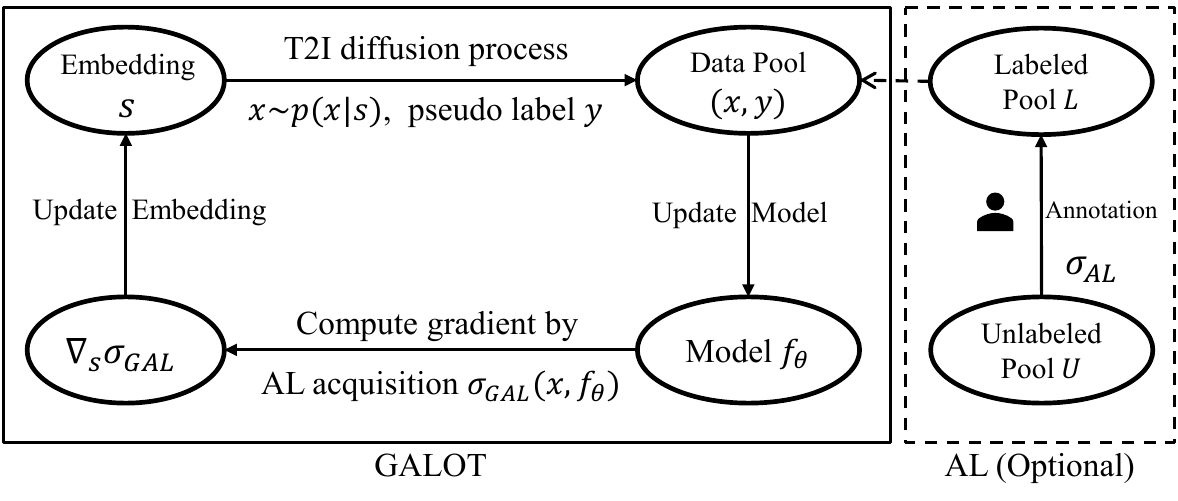}
    \caption{GALOT Workflow for Each Active Learning Cycle. In each active learning cycle, the data sample is generated using pre-trained T2I models with the embedding $s$ and pseudo label. Optionally combined with the traditional AL with real datasets as a complement, the generated data can be used to train a model. The embedding is then optimized according to the updated model via the gradients of the AL acquisition.}
    \label{fig:work flow}
    \vspace{-.2in}
\end{wrapfigure}

\subsection{Active Learning with Data Synthesis}

Consider a machine learning model, \( f_\theta: \mathbb{R}^d \rightarrow \mathcal{Y}\), where \( \theta \) denotes the model parameters. Given a loss function \( \mathcal{L} \), the model can be trained using samples \( x\in \mathbb{R}^d \) drawn from a data distribution conditioned on \( s \), i.e., \( p(x|s) \). The condition \(s\) and generated sample \(x\) can be in any format, e.g., text, audio, and image, which corresponds to different types of off-the-shelf guided diffusion models \cite{DBLP:conf/icml/Glide,DBLP:conf/icml/TexttospeechICML2021,DBLP:conf/iclr/Textto3DICLR2023}, which enable the zero-shot generative active learning in different domains. In this paper, we take the text-to-image generation as an example to pursue generative active learning in the image domain. The goal of active learning is to train the model \(f_\theta\) from scratch using as few labeled data points as possible. Different from traditional active learning \cite{CEAL,ALSurvey,LAAL}, our model is trained on the data samples drawn from the data distribution \(p(x|s)\). 

The optimization problem for vision model training can be formulated as:
\vspace{-5pt}
\begin{equation}
\theta = \arg\min_{\theta} \mathbb{E}_{x \sim p(x|s)} \mathcal{L}(f_\theta(x), y)
\end{equation}
\vspace{-2pt}
where \( y \in \mathcal{Y}\) denotes the ground-truth label of the sample \(x\). The goal of the training process is to minimize the loss over the generated sample-label pairs. 

\subsection{Optimizable Text-to-image Synthesis}

Due to the uncertainty when sampling the random data sample from the data distribution and the infinity possibility of the condition \(s\), the unlabeled data pool $\mathcal{U}_{x\sim p(x|s)}$ constructed by the data samples can be considered as an infinite set if the computational resource for sampling data can be infinite. However, not any conditions can generate informative samples that are beneficial for the training, e.g., an image generated from ``flower'' cannot help distinguish the ``dog'' and ``cat''. Therefore, the condition should be limited to the same domain as the vision task. Also, due to the high computational demand of data synthesis, we aim to generate as few samples as possible by increasing the information in the data samples, which is similar to the goal of active learning.

Traditional active learning methods \cite{ALSurvey} utilize an acquisition function, \( \sigma(\cdot) \), such as the Shannon Entropy \cite{shannon}, to prioritize the selection of informative training samples. Inspired by this idea, we propose to leverage the acquisition function to optimize the text condition to ensure the generated samples are informative. Specifically, we optimize the text embedding to maximize the acquisition function while constraining the text embedding related to the vision task. The text optimization problem can be formally written as:
\begin{equation}
s = \arg\max_{s} \mathbb{E}_{x \sim p(x|s)} \sigma(x,f_\theta) \quad \text{s.t.} \quad \|s - s^*\|_2 < \epsilon
\end{equation}
where \( s^* \) is a predefined condition related to the specific vision task at hand, and \( \epsilon \) serves as a regularization term to ensure that the optimized condition does not deviate significantly from the task-related condition.

\vspace{0.05in}

\noindent\textbf{Predefine Condition}. The construction of the predefined text condition \(s^*\) is intuitive. Since we are doing the text-to-image data synthesis, we can simply take the label name as the text to compute the text embedding as the predefined text condition. For example, when training a model to classify the CIFAR10 \cite{CIFAR10}, we transform the label names ``airplane'', ``automobile'', ``ship'', etc., to the text embedding for generating corresponding images in these classes. However, in practice, to generate high-quality images using the off-the-shelf T2I models, the text input is supposed to be as detailed as possible. Therefore, we can design some templates for different vision tasks to generate high-quality images, e.g., in our experiments, ``a realistic photo of a ship'' performs better than simply ``ship'' (see Section \ref{sec: abl exp}). Given the template $\tau$, the text condition \(s_i^*\) for class \(i\) can be computed by a transformation \(s_i^*=h_\tau(y_i)\), where \(y_i\) is the label of class \(i\). For example, given the template ``a photo of a \{label\}'' and the label name ``dog'', we simply replace the ``\{label\}'' with ``dog'', and then using CLIP \cite{CLIP} to encode the text input as text embedding.

\vspace{0.05in}

\noindent\textbf{Pseudo Label}. One of the challenges in generative active learning is the annotation of the generated data. We show that using the text-to-image model for the data synthesis can address this challenge, resulting in annotation-free active learning. Thanks to the advanced zero-shot capability of existing off-the-shelf T2I models \cite{DELL-E-2,DELL-E-3}, the generated image can accurately represent the text input. Therefore, we use the label \(y_i\) as the pseudo label for the images generated by \(s\). In the experiments, we show that the accuracy of the pseudo label can be as high as \(100\%\) under appropriate templates and embedding distortion (see Section \ref{sec:Performance of Pseudo-labeling via Human Annotation} for the experimental results).

\subsection{Text Optimization on Diffusion Models}

Diffusion models \cite{DDPM,DPM} typically include two processes for training and generating images, i.e., the forward diffusion process and the reverse diffusion process with each containing multiple steps. The forward diffusion process is used to prepare the target for training the diffusion model. With the reparameterization trick in \cite{DDPM}, the variable at the time step \(x_t\) can be expressed as:
\vspace{-5pt}
\begin{equation}
\label{eq:1}
    x_t=\sqrt{\overline{\alpha_t}}x_0 + \sqrt{1-\overline{\alpha_t}}\epsilon
\vspace{-5pt}
\end{equation} where \(\overline{\alpha_t}\) is the schedule parameter at \(t\), and \(\epsilon\sim \mathcal{N}(0,I)\).

The reverse diffusion process is adopted in the inference stage for generating images, where the variable at time step \(t-1\) is computed from the variable at time step \(t\) by denoising. Under the condition \(s\), the reverse diffusion process at each time step \(t\) can be defined as $x_{t-1}\sim p(x_{t-1}|x_t,s)$, and the final reversed variable (generated image) \(x_0\) can be expressed as:
\vspace{-5pt}
\begin{equation}
    x_0 \sim p(x_{0:T-1}|x_T,s)=\prod_{t=T}^1 p(x_{t-1}|x_t,s) 
\vspace{-5pt}
\end{equation} 
where \(T\) denotes the total time step.

Suppose we have white-box access to the pre-trained T2I diffusion models, to optimize the text embedding, one promising solution is to use the gradient descent, e.g., Projected Gradient Descent (PGD) \cite{PGD}. In this way, we update the text embedding \(s\) according to the gradients:
\vspace{-5pt}
\begin{align}
    s_i = &s_{i-1}+\alpha \ sgn[\mathbb{E}_{x_0\sim p(x_{0:T-1}|x_T,s)}\nabla_s\sigma(x_0,f_\theta)] \label{eq:text opt} \\ 
    & s.t. \ \ \|s_i-s^*\|_2 \leq \epsilon 
\vspace{-5pt}
\end{align} where \(i=1,2,...,k\) denotes the updating step, \(\alpha\) denotes the step size, \(sgn[\cdot]\) denotes the sign function. However, computing the gradient  \(\nabla_s\sigma(x_0,f_\theta)\) is extremely challenging even with the auto-gradient techniques in ML libraries, e.g., Pytorch \cite{pytorch} or TensorFlow \cite{tensorflow}, due to the recurrent denoising steps. By Proposition \ref{prop:1}, we make it feasible to estimate the gradients using the auto-gradient.

\begin{proposition}
\label{prop:1}
    Assume Eq. (\ref{eq:1}) holds in the reverse diffusion process, then the gradient can be written as:
\vspace{-5pt}
    \begin{equation}
        \nabla_s\sigma(x_0,f_\theta)= T \ \nabla_{x_0} \sigma \cdot J_{x_0,s}
        \vspace{-5pt}
    \end{equation} where \(J_{x_0,s}\) denotes the Jacobian of \(x_0\) w.r.t. \(s\).
\end{proposition}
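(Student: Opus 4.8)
The plan is to view the generated image $x_0$ as the output of the composition of the $T$ reverse denoising maps and apply the chain rule, then use the hypothesis that Eq.~(\ref{eq:1}) holds along the reverse trajectory to collapse the resulting sum of $T$ Jacobian products into $T$ identical copies of a single one-step Jacobian. Concretely, write the reverse process as $x_{t-1}=g_t(x_t,s)$ for $t=T,T-1,\dots,1$, so that $x_0=g_1(g_2(\cdots g_T(x_T,s)\cdots,s),s)$, with the initial noise $x_T$ (and any injected noise) held fixed. Differentiating this composition with respect to $s$ gives
\[
\nabla_s x_0=\sum_{t=1}^{T}\left(\prod_{t'=1}^{t-1}\frac{\partial g_{t'}}{\partial x_{t'}}\right)\frac{\partial g_t}{\partial s}.
\]

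The second step is to evaluate the ``state Jacobian'' product using the hypothesis. If Eq.~(\ref{eq:1}) holds in the reverse process, then for every $t$ we have $x_0=(x_t-\sqrt{1-\overline{\alpha_t}}\,\epsilon)/\sqrt{\overline{\alpha_t}}$ with $\epsilon$ fixed, hence $\partial x_0/\partial x_t=\overline{\alpha_t}^{-1/2}I$, and more generally $\partial x_{t'-1}/\partial x_{t'}=(\overline{\alpha_{t'-1}}/\overline{\alpha_{t'}})^{1/2}I$. The product in the $t$-th summand therefore telescopes to $\overline{\alpha_{t-1}}^{-1/2}I$ (using $\overline{\alpha_0}=1$), i.e.\ it equals $\partial x_0/\partial x_{t-1}$. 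Substituting the denoising update for $\partial g_t/\partial s$ (whose only $s$-dependence is through the noise predictor) and combining with the schedule constants, each summand collapses to the \emph{same} matrix, which we call $J_{x_0,s}$: the Jacobian of $x_0$ with respect to $s$ obtained through one representative denoising step.

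Having shown all $T$ summands coincide, we obtain $\nabla_s x_0=T\,J_{x_0,s}$, and the proposition follows from one more application of the chain rule, $\nabla_s\sigma(x_0,f_\theta)=\nabla_{x_0}\sigma\cdot\nabla_s x_0=T\,\nabla_{x_0}\sigma\cdot J_{x_0,s}$. The practical upshot is that $J_{x_0,s}$ is a \emph{single}-step Jacobian obtainable by standard autograd, so the expensive recurrent unrolling of the $T$ denoising steps is avoided.

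The step I expect to be the main obstacle is establishing that all $T$ per-step contributions are \emph{exactly} equal rather than merely of comparable magnitude: this requires using the hypothesis in two places simultaneously --- to telescope the state-Jacobian product and to reduce each $\partial g_t/\partial s$ to a common form --- and it leans on the schedule constants $\alpha_t,\overline{\alpha_t}$ cancelling cleanly, which is cleanest for a deterministic sampler and may otherwise need the injected noises frozen or an approximation licensed by the ``assume Eq.~(\ref{eq:1}) holds'' premise. I would also want to double-check the interchange of $\nabla_s$ with the expectation $\mathbb{E}_{x_0\sim p(x_{0:T-1}|x_T,s)}$ appearing in Eq.~(\ref{eq:text opt}), and the handling of the sign function there; both are orthogonal to the Jacobian identity itself.
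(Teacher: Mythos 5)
Your argument is essentially the paper's own proof written out in more detail: the paper likewise decomposes $\nabla_s\sigma(x_0,f_\theta)$ into a sum of per-step contributions ($\sum_t \nabla_{x_t}\sigma\cdot J_{x_t,s}$), then invokes Eq.~(\ref{eq:1}) to relate each $x_t$ to $x_0$ affinely so that every summand reduces to the common term $\nabla_{x_0}\sigma\cdot J_{x_0,s}$, yielding the factor $T$ and the single-step, autograd-computable Jacobian. Your telescoping of the state Jacobians and your caveats (exact equality of the summands, interchange with the expectation) make explicit exactly the heuristic steps the paper's proof glosses over, so the two arguments coincide in substance.
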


\begin{proof}
    
Note that each reverse diffusion step is conditioned on the text condition \(s\), so the gradient should be computed as the summation:
\vspace{-10pt}
\begin{align}
\footnotesize
    \nabla_s\sigma(x_0,f_\theta) & = \sum_{t=0}^T \nabla_{x_t} \sigma \cdot J_{x_t,s} \\
    & = \sum_{t=0}^T \nabla_{x_0} \sigma \cdot J_{x_0, x_t} \cdot J_{x_t,x_0} \cdot J_{x_0,s} \\ 
    (\text{by Eq. (\ref{eq:1})}) & = T \ \nabla_{x_0} \sigma \cdot J_{x_0,s} \label{eq:10}
\end{align}

Thus, this completes the proof. 
\end{proof}

\begin{algorithm}[t]
\small
\caption{GALOT Training}
\label{alg:Algorithm}
\scriptsize
\begin{algorithmic}[1] 
    \State \textbf{Input:} cycle number $N$, sample number per cycle $B_{AL}$ for AL, sample number per cycle $B_{GAL}$ for GAL, labels $y$, text perturbation size $\epsilon$, GAL acquisition function $\sigma_{GAL}$, AL acquisition function $\sigma_{AL}$, text template $\tau$, label-to-text transformation $h$, classifier $f_\theta$, unlabeled training dataset $U$, T2I model $M()$, text update steps $n$, and text update stepsize $\alpha$.
    \State \textbf{Output:} model $f_\theta$
    \For{$\text{cycle} = 1$ \textbf{to} $N$}
        \State Initialize the labeled pool as $L=\emptyset$
        \State Select top $B_{AL}$ samples from $U$ ranked by $\sigma_{AL}$ as the batch $V$ and add to the labeled pool $L=L\cup V$ 
        \State Compute the predefined text embedding: $s^* = h_\tau(y)$
        
        \State Initialize text embedding $s = s^*$
        \State Optimize text embedding: $s' = \text{TextOpt}(s, s^*, \epsilon,\alpha,n, \sigma_{GAL}, f_\theta)$
    
        \State Generate $B_{GAL}$ input pairs $G = \{x, \hat{y}\}$ with $x = M(s')$ and pseudo label $\hat{y}=y$.
        \State Train the model $f_\theta$ with $L\cup G$
    \EndFor

\end{algorithmic}
\end{algorithm}

The gradients \(\nabla_{x_0} \sigma \cdot J_{x_0,s}\) can be efficiently computed by the auto-gradient through the graph of \emph{the last reverse diffusion step}. The assumption that Eq. (\ref{eq:1}) holds is based on the empirical observation that off-the-shelf T2I models are trained to approximate this equality in the reverse process.

\vspace{-12pt}
\subsection{GAL-AL Joint Sampling}
\vspace{-8pt}

Without consuming any annotation budget, our method can achieve moderate performance on zero-shot classification (see Section \ref{sec: joint sampling} for the experimental results), however, to train a model targeting the high performance on a test set in a specific task, the real training data can be helpful. Therefore, we combine the generated samples (without annotation) and the real samples (with annotation) to train the model, leading to significantly boosted performance on the test set compared with the model solely trained on the training set.

The algorithms for generative active learning are presented in Algorithm \ref{alg:Algorithm} (also see \autoref{fig:work flow} for the summarized workflow). \emph{TextOpt} function follows Eq. (\ref{eq:text opt}) to optimize the text embedding. We estimate the expectation of the gradients using the average gradients over $k$ generated samples.
\vspace{-.1in}



\section{Experimental Results}
\label{sec:Experiment}

In this section, we comprehensively evaluate the proposed generative active learning framework. First, we benchmark \sys with the SOTA AL baselines. Second, we evaluate the performance of \sys when the model is trained solely on the generated data without real data. Third, we evaluate the reusability and the transferability of the generated data. Fourth, we evaluate the performance of the pseudo-labeling by human annotation. Finally, we conduct 7 various ablation studies to evaluate the effectiveness of each component of \sys. Visualized examples can be found in Appendix \ref{sec:visual examples}.


\noindent \textbf{Datasets}. We evaluate our model using three established image datasets of varying scales: CIFAR10/100 \cite{CIFAR10}, and TinyImageNet \cite{tinyimagenet}. CIFAR10 includes 60,000 32x32 color images across 10 classes, suitable for basic image recognition tasks. CIFAR100 expands this to 100 classes with the same image count, offering a more complex classification challenge. TinyImageNet, a condensed version of the larger ImageNet dataset \cite{imagenet}, comprises 100,000 training, 10,000 validation, and 10,000 testing images in 64x64 resolution across 200 classes, providing a diverse and challenging environment for testing.


\noindent\textbf{Experimental Setting}. \sys is based on the off-the-shelf Text-to-Image models for generating data points. Specifically, we use Stable Diffusion \cite{StableDiffusion} pre-trained model (version 2.1 base \footnote{\url{https://huggingface.co/stabilityai/stable-diffusion-2-1-base}}) in all the experiments. Other pre-trained models like Imagen \cite{Imagen}, or other online platforms like DELL-E-3 \footnote{\url{https://openai.com/dall-e-3}} can also be used for zero-shot GAL w/wo the text optimization. In all the experiments, we set the diffusion step $T$ as $50$, and the sampling number for gradient estimation $k$ as $6$. The resolution of the generated images is $512\times512$ by default and further resized to match different datasets.  In all the experiments, unless otherwise stated, we use GAL-AL joint sampling, set ResNet18 as the classifier architecture, and set $\epsilon$ to linearly grow from $0$ to $0.5$. The detailed parameter setting for each section is summarized in Appendix \ref{apd: exp setting} for thorough reference and clarity. 

\noindent\textbf{Experimental Environment}. We implement \sys in PyTorch \cite{pytorch} based on open-source libraries Diffusers \footnote{ \url{https://github.com/huggingface/diffusers}} and DeepALPlus \footnote{\url{https://github.com/SineZHAN/deepALplus}} \cite{ALsurvey2022}. The experiments were running on a server with AMD EPYC Genoa 9354 CPUs (32 Core, 3.3GHz), and NVIDIA H100 Hopper GPUs (80GB each).
    
\begin{table*}[t]
	\centering
	\scriptsize
	\caption{Comparison with SOTA AL. We present the accuracy on the test set vs. different annotation budget $|L|$. The first column 
 presents the dataset and the fully-supervised accuracy. The last column presents the average accuracy over all settings. }\label{table:whole}

\resizebox{\linewidth}{!}{

\begin{tabular}{@{}llrrrrrrrrrrr@{}}
\toprule
Dataset &
  Method &
  1000 &
  2000 &
  3000 &
  4000 &
  5000 &
  6000 &
  7000 &
  8000 &
  9000 &
  10000 &
  Average \\ \midrule
\multirow{13}{*}{\begin{tabular}[c]{c@{}l@{}}CIFAR-10\\ (0.9536)\end{tabular}} &
  GAAL \cite{GAAL} &
  0.2811 & 
  0.2132 &
  0.2086 &
  0.2140 &
  0.2086 &
  0.1991 &
  0.2005 & 
  0.2012 &
  0.1981 &
  0.2011 &
  0.2126
  \\
  &
  RandomSamping &
  0.5498 &
  0.6631 &
  0.7347 &
  0.7912 &
  0.8195 &
  0.8369 &
  0.8513 &
  0.8645 &
  0.8789 &
  0.8841 &
  0.7874 \\
 &
  EntropySampling \cite{ALSurvey} &
  0.5466 &
  0.6718 &
  0.7797 &
  0.8109 &
  0.8460 &
  0.8580 &
  \underline{0.8876} &
  0.8901 &
  0.9013 &
  0.9063 &
  0.8098 \\
 &
  MarginSampling \cite{MarginSampling} &
  0.5506 &
  0.6800 &
  \underline{0.8127} &
  0.8127 &
  0.8400 &
  0.8546 &
  0.8830 &
  0.8927 &
  \underline{0.9058} &
  0.9049 &
  \underline{0.8137} \\
 &
  LeastConfidence \cite{LeastConfidence} &
  0.5591 &
  0.6790 &
  0.7661 &
  0.8111 &
  \underline{0.8481} &
  0.8640 &
  0.8790 &
  0.8932 &
  0.9035 &
  \underline{0.9145} &
  0.8118 \\
 &
  BALD \cite{BALD} &
  0.5589 &
  0.6591 &
  0.7612 &
  \underline{0.8184} &
  \underline{0.8481} &
  0.8654 &
  0.8788 &
  0.8897 &
  0.9019 &
  0.9077 &
  0.8089 \\
 &
  BADGE \cite{BADGE} &
  0.5592 &
  \underline{0.6882} &
  0.7698 &
  0.8160 &
  0.8458 &
  0.8582 &
  0.8774 &
  0.8877 &
  0.9006 &
  0.9049 &
  0.8108 \\
 &
  MeanSTD \cite{MeanSTD} &
  0.5350 &
  0.7046 &
  0.7393 &
  0.8116 &
  0.8398 &
  0.8679 &
  0.8828 &
  0.8909 &
  0.8971 &
  0.9093 &
  0.8078 \\
 &
  VarRatio \cite{VarRatio} &
  0.5534 &
  0.6616 &
  0.7733 &
  0.8045 &
  0.8413 &
  \underline{0.8682} &
  0.8813 &
  \underline{0.8995} &
  0.8994 &
  0.9079 &
  0.8090 \\
 &
  KMeans &
  \underline{0.5594} &
  0.6646 &
  0.7381 &
  0.7807 &
  0.8078 &
  0.8347 &
  0.8516 &
  0.8623 &
  0.8743 &
  0.8800 &
  0.7854 \\
 &
  CoreSet \cite{CoreSet} &
  0.5529 &
  0.6748 &
  0.7477 &
  0.8056 &
  0.8361 &
  0.8534 &
  0.8739 &
  0.8866 &
  0.8951 &
  0.9024 &
  0.8029 \\
 &
  LossPrediction \cite{LPL} &
  0.5291 &
  0.6319 &
  0.7205 &
  0.7366 &
  0.7688 &
  0.8252 &
  0.8523 &
  0.7977 &
  0.8786 &
  0.8829 &
  0.7624 \\ \cmidrule(l){2-13} 
  &
   \multicolumn{1}{l}{\sys (basic)} &
   0.5874 &
   0.7162 &
   0.7966 &
   0.8288 &
   0.8631 &
   0.8790 & 
   0.8937 &
   0.9012 &
   0.9133 &
   0.9179 &
   0.8297\\
 &
  \multicolumn{1}{l}{\sys} &
  \cellcolor{green!20}\textbf{0.6460} &
  \cellcolor{green!20}\textbf{0.7551} &
  \cellcolor{green!20}\textbf{0.8245} &
  \cellcolor{green!20}\textbf{0.8594} &
  \cellcolor{green!20}\textbf{0.8763} &
  \cellcolor{green!20}\textbf{0.8958} &
  \cellcolor{green!20}\textbf{0.9073} &
  \cellcolor{green!20}\textbf{0.9130} &
  \cellcolor{green!20}\textbf{0.9188} &
  \cellcolor{green!20}\textbf{0.9246} &
  \cellcolor{green!20}\textbf{0.8521} \\
 &
  Improvement &
  +0.0878 &
  +0.0505 &
  +0.0118 &
  +0.0410 &
  +0.0282 &
  +0.0276 &
  +0.0197 &
  +0.0135 &
  +0.0130 &
  +0.0101 &
  +0.0384 \\ \midrule
\multirow{13}{*}{\begin{tabular}[c]{c@{}l@{}}CIFAR-100\\ (0.7668)\end{tabular}} &
  RandomSamping &
  0.1408 &
  0.2080 &
  0.2656 &
  0.3086 &
  0.3635 &
  0.3995 &
  0.4418 &
  0.4703 &
  0.4973 &
  0.5250 &
  0.3620 \\
 &
  EntropySampling \cite{ALSurvey} &
  0.1442 &
  0.1938 &
  0.2474 &
  0.2957 &
  0.3517 &
  0.3851 &
  0.4161 &
  0.4441 &
  0.4839 &
  0.5379 &
  0.3500 \\
 &
  MarginSampling \cite{MarginSampling} &
  \underline{0.1482} &
  0.1985 &
  0.2528 &
  0.3155 &
  0.3788 &
  0.4056 &
  \underline{0.4476} &
  0.4951 &
  \underline{0.5237} &
  0.5343 &
  0.3700 \\
 &
  LeastConfidence \cite{LeastConfidence} &
  0.1448 &
  0.1938 &
  0.2521 &
  0.3087 &
  0.3553 &
  0.3830 &
  0.4216 &
  0.4545 &
  0.4981 &
  0.5415 &
  0.3553 \\
 &
  BALD \cite{BALD} &
  0.1450 &
  0.2012 &
  0.2644 &
  0.3099 &
  0.3553 &
  0.3960 &
  0.4245 &
  0.4634 &
  0.4880 &
  0.5274 &
  0.3575 \\
 &
  BADGE \cite{BADGE} &
  0.1441 &
  0.2066 &
  0.2644 &
  0.3137 &
  \underline{0.3857} &
  \underline{0.4164} &
  0.4462 &
  0.4810 &
  0.5094 &
  \underline{0.5507} &
  \underline{0.3718} \\
 &
  MeanSTD \cite{MeanSTD} &
  0.1434 &
  0.2016 &
  0.2673 &
  0.3221 &
  0.3560 &
  0.3877 &
  0.4298 &
  0.4573 &
  0.4837 &
  0.5270 &
  0.3576 \\
 &
  VarRatio \cite{VarRatio} &
  0.1447 &
  0.1954 &
  0.2463 &
  0.3026 &
  0.3480 &
  0.3807 &
  0.4316 &
  0.4725 &
  0.4829 &
  0.5415 &
  0.3546 \\
 &
  KMeans &
  0.1462 &
  \underline{0.2137} &
  \underline{0.2707} &
  \underline{0.3261} &
  0.3631 &
  0.3863 &
  0.4165 &
  0.4521 &
  0.4833 &
  0.5052 &
  0.3563 \\
 &
  CoreSet \cite{CoreSet} &
  0.1424 &
  0.2059 &
  0.2560 &
  0.3169 &
  0.3646 &
  0.3988 &
  0.4413 &
  \underline{0.4972} &
  0.5164 &
  0.5483 &
  0.3688 \\
 &
  LossPrediction \cite{LPL} &
  0.1322 &
  0.1780 &
  0.2387 &
  0.2988 &
  0.3275 &
  0.3507 &
  0.4143 &
  0.4126 &
  0.4444 &
  0.4566 &
  0.3254 \\ \cmidrule(l){2-13} 
   &
  \multicolumn{1}{l}{\sys (basic)} &
  0.1669 &
  0.2713 &
  0.3310 &
  0.3885 &
  0.4474 &
  0.4777 &
  0.5105 &
  0.5411 &
  0.5629 &
  0.5943 &
  0.4292 \\
 &
  \multicolumn{1}{l}{\sys} &
  \cellcolor{green!20}\textbf{0.1892} &
  \cellcolor{green!20}\textbf{0.2893} &
  \cellcolor{green!20}\textbf{0.3585} &
  \cellcolor{green!20}\textbf{0.4170} &
  \cellcolor{green!20}\textbf{0.4683} &
  \cellcolor{green!20}\textbf{0.4921} &
  \cellcolor{green!20}\textbf{0.5346} &
  \cellcolor{green!20}\textbf{0.5619} &
  \cellcolor{green!20}\textbf{0.5807} &
  \cellcolor{green!20}\textbf{0.5974} &
  \cellcolor{green!20}\textbf{0.4489} \\
 &
  Improvement &
   +0.0410&
   +0.0756&
   +0.0878&
   +0.0909&
   +0.0826&
   +0.0757&
   +0.0870&
   +0.0647&
   +0.0570&
   +0.0467&
   +0.0771\\ \midrule
\multirow{13}{*}{\begin{tabular}[c]{c@{}l@{}}TinyImageNet\\ (0.5054)\end{tabular}} &
  RandomSamping &
  0.1279 &
  0.1944 &
  0.2327 &
  0.2617 &
  \underline{0.2574} &
  0.3064 &
  \underline{0.3348} &
  0.3477 &
  0.3604 &
  \underline{0.3613} &
  0.2785 \\
 &
  EntropySampling \cite{ALSurvey} &
  0.1256 &
  0.1809 &
  0.2152 &
  0.2399 &
  0.2222 &
  0.2817 &
  0.3159 &
  0.3149 &
  0.3378 &
  0.3313 &
  0.2565 \\
 &
  MarginSampling \cite{MarginSampling} &
  0.1274 &
  0.1764 &
  0.2290 &
  0.2496 &
  0.2350 &
  0.3029 &
  0.3137 &
  0.3355 &
  0.3517 &
  0.3472 &
  0.2668 \\
 &
  LeastConfidence \cite{LeastConfidence} &
  0.1269 &
  0.1799 &
  0.2187 &
  0.2427 &
  0.2223 &
  0.2909 &
  0.3031 &
  0.3152 &
  0.3351 &
  0.3291 &
  0.2564 \\
 &
  BALD \cite{BALD} &
  0.1282 &
  0.1877 &
  0.2230 &
  0.251 &
  0.2466 &
  0.2942 &
  0.3142 &
  0.3263 &
  0.3364 &
  0.3387 &
  0.2646 \\
 &
  BADGE \cite{BADGE} &
  0.1278 &
  0.1858 &
  \underline{0.2350} &
  0.2651 &
  0.2467 &
  0.3073 &
  0.3190 &
  0.3410 &
  0.3562 &
  0.3518 &
  0.2736 \\
 &
  MeanSTD \cite{MeanSTD} &
  0.1253 &
  0.1815 &
  0.2250 &
  0.2447 &
  0.2422 &
  0.2920 &
  0.3123 &
  0.3279 &
  0.3320 &
  0.3351 &
  0.2618 \\
 &
  VarRatio \cite{VarRatio} &
  \underline{0.1288} &
  0.1736 &
  0.2025 &
  0.2429 &
  0.2222 &
  0.2915 &
  0.2997 &
  0.3061 &
  0.3280 &
  0.3279 &
  0.2523 \\
 &
  KMeans &
  0.1259 &
  \underline{0.1965} &
  0.2329 &
  \underline{0.2704} &
  0.2546 &
  \underline{0.3153} &
  0.3329 &
  \underline{0.3510} &
  \underline{0.3633} &
  0.3552 &
  \underline{0.2798} \\
 &
  CoreSet \cite{CoreSet} &
  0.1274 &
  0.1793 &
  0.2114 &
  0.2441 &
  0.2258 &
  0.2944 &
  0.3092 &
  0.3209 &
  0.3358 &
  0.3348 &
  0.2583 \\
 &
  LossPrediction \cite{LPL} &
  0.0424 &
  0.0599 &
  0.0778 &
  0.1113 &
  0.1136 &
  0.1427 &
  0.1500 &
  0.1654 &
  0.1800 &
  0.1775 &
  0.1221 \\ \cmidrule(l){2-13}
  &
  \multicolumn{1}{l}{\sys (basic)} &
  0.1660 &
  \cellcolor{green!20}\textbf{0.2332} &
  0.2805 &
  0.3087 &
  0.3134 &
  \cellcolor{green!20}\textbf{0.3540} &
  \cellcolor{green!20}\textbf{0.3566} &
  \cellcolor{green!20}\textbf{0.3683} &
  \cellcolor{green!20}\textbf{0.3730} &
  \cellcolor{green!20}\textbf{0.3790} &
   \cellcolor{green!20}\textbf{0.3133} \\
 &
  \multicolumn{1}{l}{\sys} &
  \cellcolor{green!20}\textbf{0.1681} &
  0.2316 &
  \cellcolor{green!20}\textbf{0.2812} &
  \cellcolor{green!20}\textbf{0.3109} &
  \cellcolor{green!20}\textbf{0.3158} &
  0.3481 &
  0.3562 &
  0.3665 &
  0.3705 &
  0.3783 &
  0.3127 \\
 &
  Improvement &
  +0.0393 &
  +0.0367 &
  +0.0462 &
  +0.0405 &
  +0.0584 &
  +0.0387 &
  +0.0218 &
  +0.0173 &
  +0.0097 &
  +0.0177 &
  +0.0335 \\ \bottomrule
\end{tabular}
}
\vspace{-12pt}
\end{table*}

\subsection{GALOT vs. SOTA Active Learning Methods}
\label{sec: main exp}

We first present a comprehensive comparison of model performance between \sys and state-of-the-art (SOTA) AL approaches (12 methods). To evaluate the efficiency of the text embedding refinement, we also evaluate a basic version of \sys, denoted as ``\sys (basic)'', where it only considers the basic text template without text optimization. For all three datasets, we perform $N=10$ cycle of active learning with a ResNet18 classifier (except for the GAAL \cite{GAAL} which only support SVM). The sampling number per cycle for the SOTA active learning methods is set to $B_{AL}=1,000$, resulting in a $10,000$ total annotation number. For \sys and its basic version, we additionally generate $B_{GAL}=|L|$ number of samples for each cycle without consuming the annotation budget. We measure the prediction accuracy on the test set of each dataset.


As is shown in \autoref{table:whole}, none of the SOTA methods shows dominant performance over other methods. However, our \sys and its basic version consistently outperforms SOTA methods in all the settings. This is a significant achievement, indicating the robustness and efficiency of \sys. The superior performance of \sys is due to the additional informative data generated by T2I models with pseudo-labels, leading to a minimum improvement of $0.72\%$ and a maximum improvement of $8.78\%$ over SOTA methods. The average improvement on CIFAR10, CIFAR100, and TinyImageNet are $3.84\%$, $7.71\%$, and $3.29\%$, respectively. This indicates our \sys can serve as an orthogonal method to boost the performance of active learning methods universally. It is worth noting that the \sys outperforms \sys (basic) by as much as $5.86\%$ on CIFAR10, indicating the superiority of the text refinement. On TinyImageNet, \sys and \sys (basic) give similar accuracies due to the high complexity of text embedding space (TinyImageNet has $200$ classes) but still demonstrate significant improvement over AL baselines.

\subsection{GAL-AL Joint Sampling}
\label{sec: joint sampling}

In this section, we benchmark our GAL-AL joint sampling (``AL+GAL'') with 1) Baseline AL: the traditional pool-based AL baseline, which solely relies on labeled samples for training; 2) Baseline GAL: the GAL baseline that solely relies on the generated samples for training; 3) Baseline Fully-supervised: the fully-supervised baseline that trains the model with full training dataset. Detailed settings on experimental parameters can be found in Appendix \ref{apd: exp setting}.

\begin{wrapfigure}{r}{0.4\textwidth}
    \centering
    \vspace{-8pt}
    \includegraphics[width=1.0\linewidth]{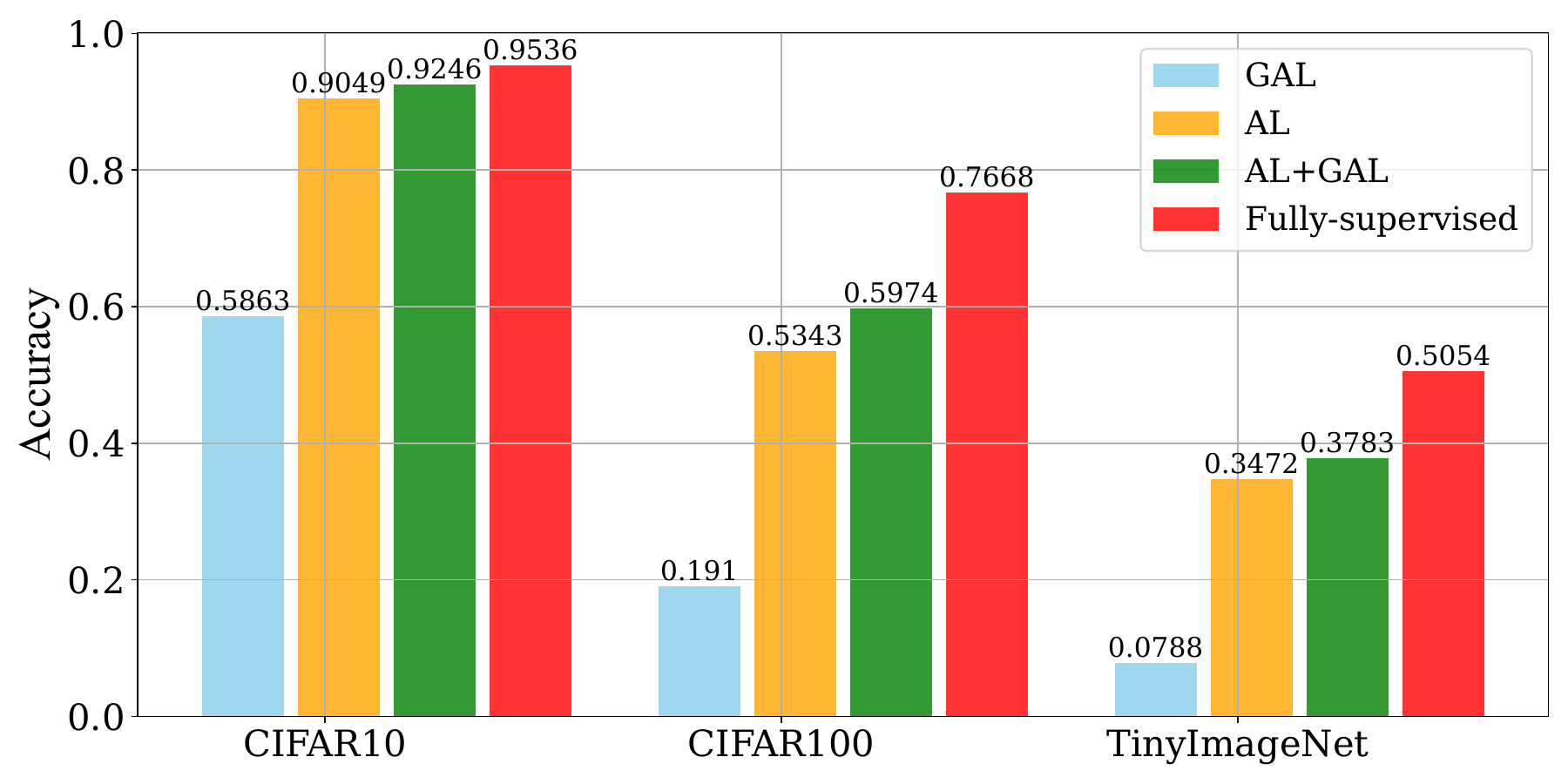}
    \vspace{-14pt}
    \caption{Comparison of Different Baselines.}
    \label{fig: baselines}
    \vspace{-14pt}
\end{wrapfigure}

As is shown in \autoref{fig: baselines}, we surprisingly observe that Baseline GAL achieves an accuracy as high as $58.63\%$ on CIFAR10. It is worth noting that the training does not require any real training data, but only relies on the knowledge about the vision task (text inputs). It provides a potential solution to train a model from scratch without any data and labeling. This kind of text-to-model capability stems from the zero-shot representation of large language models and the high-fidelity generation of diffusion models, addressing the high cost of labeling, and data collection. Combining the synthesized samples with the labeled samples, \sys significantly improves the performance of SOTA active learning for free (without additional annotation). On CIFAR10, GAL-AL can achieve $92.46\%$ accuracy ($2.9\%$ gap to the fully-supervised learning) with only $10,000$ labels.

\vspace{-2pt}
\subsection{Dataset Reuse and Transferability}
\label{sec: Dataset Reuse}
In this experiment, we train different models on the generated dataset in Section \ref{sec: main exp} to illustrate the data transferability over different models. We focus on four distinct models, which vary in size and architecture: VGG16~\cite{simonyan2014very}, DenseNet121~\cite{huang2017densely}, MobileNetV2~\cite{sandler2018mobilenetv2} and DLA~\cite{yu2018deep}. \autoref{table:transfer} presents the transferability results, showcasing various models' performance when trained on the generated dataset.
The first two rows present the traditional AL results, and the third row is the results of \sys as the same in \autoref{table:whole}.
The following rows show the performance of different models when trained on the data generated for ResNet18.

The data presented in the table reveals that all four models trained on the reused dataset surpass the two baseline AL methods in terms of average model accuracy. 
Notably, with the exception of VGG16, all other models even exceed the baseline active learning methods in model accuracy after each training cycle. 
This indicates that the dataset generated by \sys demonstrates significant reusability and transferability.
Compared to Baseline (Resnet 18), models like DenseNet121, MobileNetV2, and DLA not only demonstrate superior average accuracy but also higher accuracy after each training round. 
This improved performance is likely due to the inherent advantages of their architectural designs. In conclusion, \autoref{table:transfer} demonstrates that the \sys can generate informative data that is not specific to a model but general for a task (or multiple tasks).

\begin{table}[t]
	\centering
	\scriptsize
	\caption{Data Reuse for Different Networks on CIFAR10. }\label{table:transfer}
\resizebox{\columnwidth}{!}{
\begin{tabular}{@{}lrrrrrrrrrrr@{}}
\toprule
\textbf{Model} & 1000   & 2000   & 3000   & 4000   & 5000   & 6000   & 7000   & 8000   & 9000   & 10000  & Average \\ \midrule
MarginSampling       & 0.5506 & 0.6800 & 0.8127 & 0.8127 & 0.8400 & 0.8546 & 0.8830 & 0.8927 & 0.9058 & 0.9049 & 0.8137 \\
LeastConfidence      & 0.5591 & 0.6790 & 0.7661 & 0.8111 & 0.8481 & 0.8640 & 0.8790 & 0.8932 & 0.9035 & 0.9145 & 0.8118 \\ \midrule
Baseline (Resnet 18) & 0.6460 & 0.7551 & 0.8245 & 0.8594 & 0.8763 & 0.8958 & 0.9073 & 0.9130 & 0.9188 & 0.9246 & 0.8521 \\ \midrule
VGG16                      & 0.6597 & 0.7687 & 0.8069 & 0.8362 & 0.8587 & 0.8746 & 0.8834 & 0.8903 & 0.8915 & 0.9029 & 0.8373  \\
DenseNet121                & 0.6571 & 0.7832 & 0.8434 & 0.8569 & 0.8872 & 0.8969 & 0.9143 & 0.9199 & 0.9276 & 0.9359 & 0.8622  \\
MobileNetv2                & 0.6551 & 0.7689 & 0.8137 & 0.8491 & 0.8729 & 0.8881 & 0.8996 & 0.9093 & 0.9196 & 0.9214 & 0.8498  \\
DLA                        & 0.6634 & 0.7747 & 0.8161 & 0.8546 & 0.8731 & 0.8873 & 0.8986 & 0.9080 & 0.9184 & 0.9252 & 0.8519  \\
\bottomrule
\end{tabular}


}
\vspace{-12pt}
\end{table}






\subsection{Performance of Pseudo-labeling via Human Annotation}
\label{sec:Performance of Pseudo-labeling via Human Annotation}

\begin{wrapfigure}{r}{0.4\textwidth}
    \centering
    \vspace{-8pt}
    \includegraphics[width=1.0\linewidth]{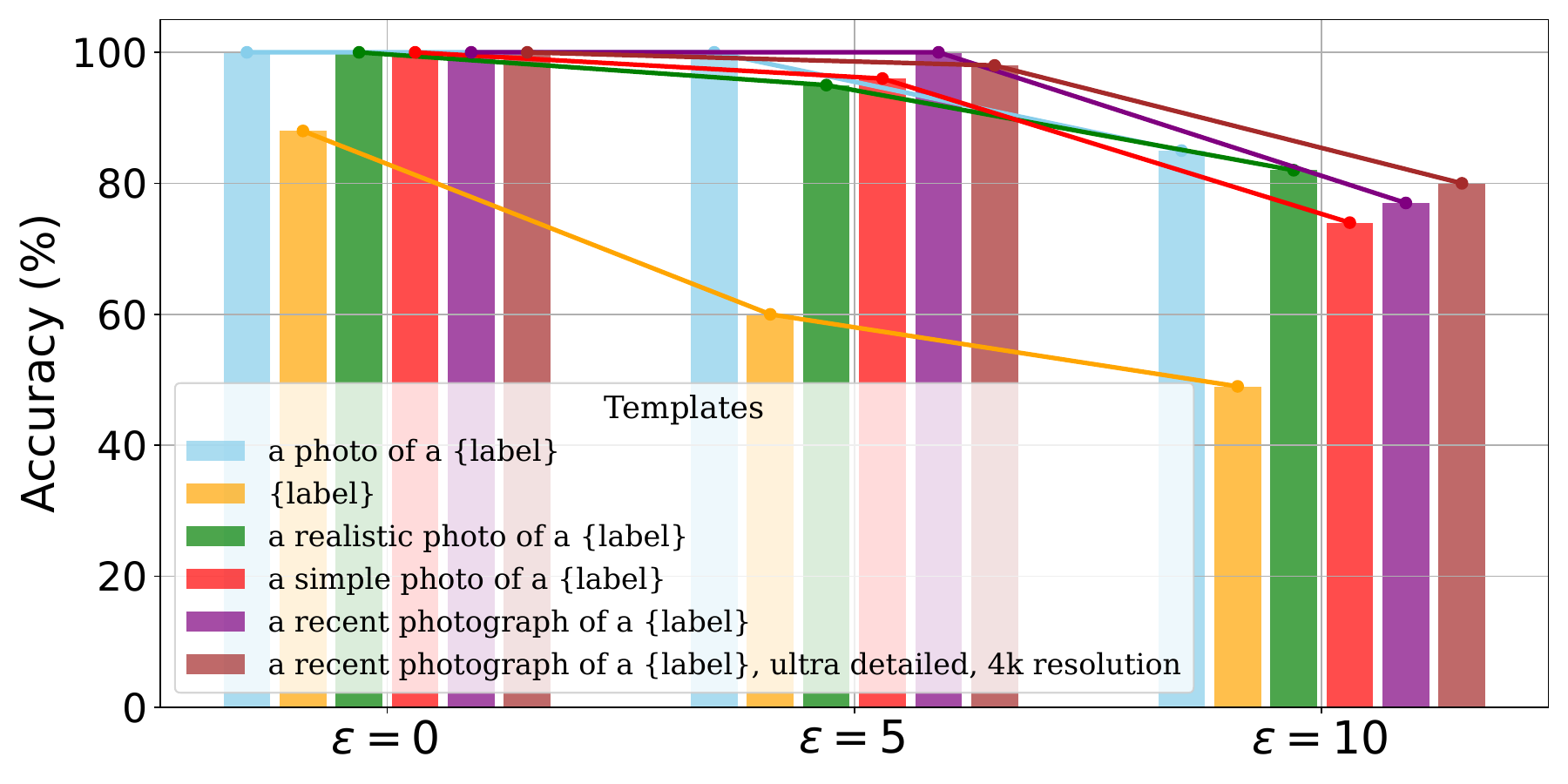}
    \vspace{-14pt}
    \caption{Text-to-image Generation Accuracy (Human Evaluated) vs. $\epsilon$ with Different Templates.}
    \label{fig:histogram_labeling}
    \vspace{-14pt}
\end{wrapfigure}
To evaluate the efficacy of pseudo-labels generated by our model, we compute the accuracy of the pseudo-label through human annotation. Specifically, for each template, we randomly optimize the text embedding under the constraint of various $\epsilon$. The generated image is marked as correct when it accurately represents the pseudo-label. We evaluate the accuracy for $100$ images on each setting. In \autoref{fig:histogram_labeling}, we observe that descriptive templates like ``a photo of a \{label\}'' demonstrated higher robustness, maintaining high accuracy even at higher distortion levels. In contrast, simpler templates, particularly ``{label'', were more susceptible to distortion, showing a marked decrease in accuracy. These results underscore the significance of prompt complexity in maintaining the fidelity of generative models under varying conditions, highlighting the need for well-structured prompts in enhancing model robustness.


\subsection{Ablation Study}
\label{sec: abl exp}


\begin{table*}[!h]
    \centering
   \vspace{-.2in}
    \caption{Comparison of Different Text Templates on CIFAR10.}
    \vspace{-.1in}
    \resizebox{\linewidth}{!}{\begin{tabular}{c c c c c c c}
        \toprule
        Text Template & 1000 &2000 &3000 &4000 &5000 &Average \\
        \hline
        ``a photo of a \{label\}"                                                   & 0.6091 $\pm$ 0.0032 & 0.7388 $\pm$ 0.0051 & 0.8103 $\pm$ 0.0030& 0.8403 $\pm$ 0.0031 & 0.8638 $\pm$ 0.0036 & 0.7642 $\pm$ 0.0085 \\
        ``\{label\}"                                                                & 0.5566 $\pm$ 0.0072 & 0.6969 $\pm$ 0.001& 0.7921 $\pm$ 0.0015& 0.8312 $\pm$ 0.0001 & 0.8525 $\pm$ 0.0005&0.7521 $\pm$ 0.0077\\
        ``a realistic photo of a \{label\}"                                         & 0.6074 $\pm$ 0.0049& 0.7279 $\pm$ 0.0015& 0.8049 $\pm$ 0.0054& 0.8489 $\pm$ 0.0015& 0.8644 $\pm$ 0.0026&0.7642 $\pm$ 0.0039\\
        ``a simple photo of a \{label\}"                                            & 0.6043 $\pm$ 0.0025& 0.7466 $\pm$ 0.0067& 0.8063 $\pm$ 0.0036& 0.8367 $\pm$ 0.0033& 0.8607 $\pm$ 0.0027&\cellcolor{green!20}\textbf{0.7718 $\pm$ 0.0005}\\
        ``a recent color photograph of a \{label\}"                                 & 0.5998 $\pm$ 0.0032& 0.7475 $\pm$ 0.0061& 0.8051 $\pm$ 0.0030& 0.8459 $\pm$ 0.0011&0.8682 $\pm$ 0.0020 &\cellcolor{green!20}\textbf{0.7721 $\pm$ 0.0002}\\
        \footnotesize{"a recent color photograph of a \{label\}, ultra detailed, ..."}   & 0.5842 $\pm$ 0.0062& 0.7361 $\pm$ 0.0009& 0.7986 $\pm$ 0.0057& 0.8328 $\pm$ 0.0032& 0.8667 $\pm$ 0.0035&0.7672 $\pm$ 0.0000\\
        \bottomrule
    \end{tabular}}
    \label{tab:text_template_comparison}
\end{table*}

\noindent\textbf{Text Template Comparisons for CIFAR10}. In this section, we evaluate the performance of \sys versus various different text templates. 
The text input is constructed using different text templates in \autoref{tab:text_template_comparison} by replacing ``\{label\}'' with the label name of each class. In \autoref{tab:text_template_comparison}, it is observed that none of the templates dominate over all annotation budgets. However, the templates "a realistic photo of a \{label\}" and "a simple photo of a \{label\}" generally perform better than other templates, while simply using "\{label\}" as the template results in the worse performance on average. This indicates that using appropriate text templates to generate the image can improve the image generation quality (see Section \ref{sec:Performance of Pseudo-labeling via Human Annotation} for more results) and further improve the learning.

\begin{wraptable}{r}{0.40\linewidth}
\centering
\caption{Ablation study of $\epsilon$.}
\label{tab:epsilon}
\resizebox{0.40\columnwidth}{!}{
\begin{tabular}{l c c c c c c}
\toprule
Method & $\epsilon$ & 1000 & 2000 & 3000 & 4000 & 5000 \\
\toprule
GALOT w/o Opt & 0 & \cellcolor{green!20}\textbf{0.6547} & 0.7700 & 0.8228 & 0.8497 & 0.8733 \\
\hline
GALOT w/ Opt  & 0.25 & 0.6396 & 0.7675 & 0.8275 & 0.8576 & 0.8739 \\
GALOT w/ Opt & 0.50 & 0.6440 & 0.7676 & \cellcolor{green!20}\textbf{0.8278} & \cellcolor{green!20}\textbf{0.8627} & \cellcolor{green!20}\textbf{0.8795} \\
GALOT w/ Opt & 1.00 & 0.6407 & 0.7549 & 0.8213 & 0.8608 & 0.8764 \\
GALOT w/ Opt & 10.00 &0.6224 & \cellcolor{green!20}\textbf{0.7702} & 0.8111 & 0.8471 &0.8655 \\
\bottomrule
\end{tabular}}
\vspace{-6pt}
\end{wraptable}

\noindent\textbf{$\epsilon$ in Text Embedding Optimization}. The impact of varying values of $\epsilon$ on text embedding optimization is illustrated in \autoref{tab:epsilon}. The results show that using $\epsilon>0$ can achieve better performance than setting $\epsilon=0$, except for the $|L|=1,000$ setting. This indicates that the text embedding optimization is effective.
\label{sec: abl acq}

\begin{wrapfigure}{r}{0.3\linewidth}
    \vspace{-14pt}
    \includegraphics[width=\linewidth]{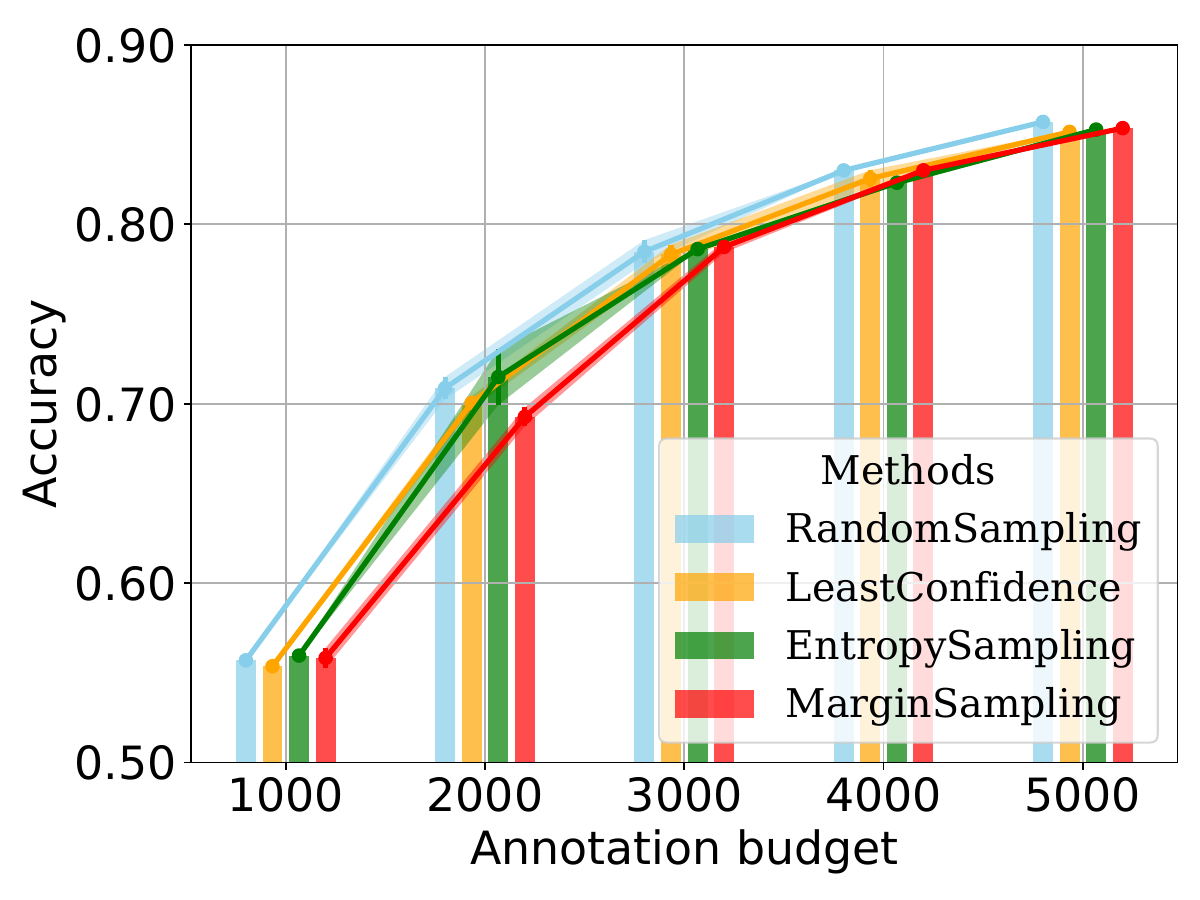}\vspace{-12pt}
    \caption{Accuracy vs Acquisition Methods $\sigma_{GAL}$.}
    \label{fig:ablation_methods}
    \vspace{-18pt}
\end{wrapfigure}


\noindent\textbf{Acquisition Function $\sigma_{GAL}$}. In this study, we assessed the effectiveness of four acquisition functions for GAL in \sys: random sampling, entropy sampling, margin sampling, and least confidence. 
According to the results presented in~\autoref{fig:ablation_methods}, random sampling demonstrates superior performance, particularly in the last three cycles.
While this might seem unconventional, a similar trend has been observed on TinyImageNet in \autoref{table:whole}. It is probably when the model is not well-trained, that the guidance of the AL acquisition function is meaningless, since when the model predicts more accurately (see CIFAR10 and CIFAR100 in \autoref{table:whole}), other acquisition function gradually outperforms the random sampling.




\begin{wrapfigure}{r}{0.3\linewidth}
    \vspace{-14pt}
    \includegraphics[width=\linewidth]{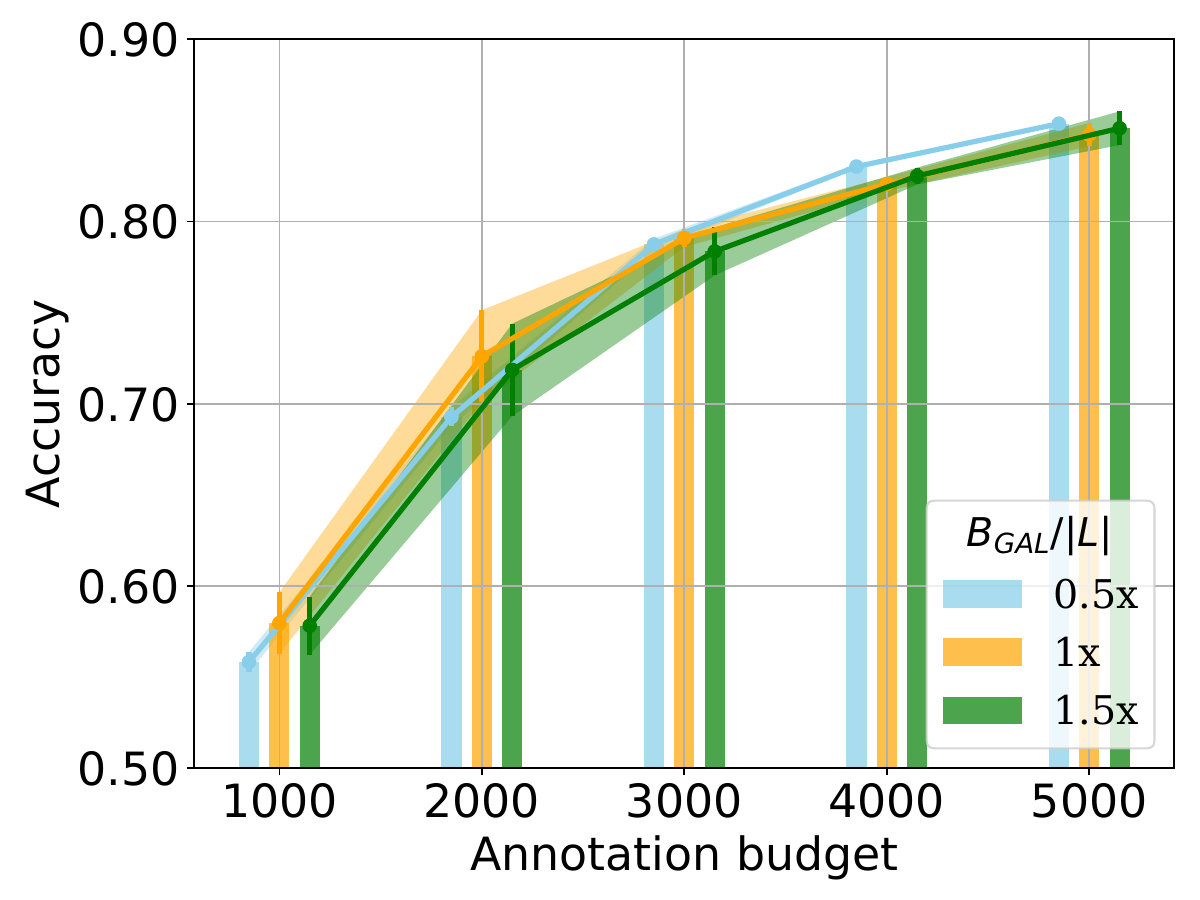}
    \vspace{-18pt}
    \caption{Accuracy vs $B_{GAL}$.}
    \label{fig:ablation_GAL}
    \vspace{-24pt}
\end{wrapfigure}

\vspace{0.05in}
\noindent\textbf{Sampling Number $B_{GAL}$ vs. Annotation Budgets}. In this experiment, we examine the effects of the number of generated samples on the performance of \sys. We generate $0.5|L|$, $|L|$, and $1.5|L|$ data samples for \sys.
The results are presented in~\autoref{fig:ablation_GAL}. The performance with different $B_{GAL}$ does not vary a lot because, in the same cycle, all the $B_{GAL}$ samples are generated with the same text embedding, thus cannot provide new data distribution, resulting in a similar performance. 

\begin{table}[ht]
\centering
\begin{minipage}{0.48\textwidth}
    \centering
    \caption{Training Scale Comparison}
    \label{tab:training scale}
    \resizebox{\textwidth}{!}{%
    \begin{tabular}{lc c c c c}
    \toprule
    Methods & 1000 & 2000 & 3000 & 4000 & 5000 \\ \midrule
    MarginSampling (300 epochs) & 0.5578 & 0.6686 & 0.7369 & 0.8001 & 0.8383 \\
    MarginSampling (600 epochs) & 0.5730 & 0.6810 & 0.7793 & 0.8356 & 0.8529 \\
    GALOT (300 epochs) & \cellcolor{green!20}\textbf{0.6460} & \cellcolor{green!20}\textbf{0.7551} & \cellcolor{green!20}\textbf{0.8245} & \cellcolor{green!20}\textbf{0.8594} & \cellcolor{green!20}\textbf{0.8763} \\
    \bottomrule
    \end{tabular}%
    }
\end{minipage}\hfill
\begin{minipage}{0.48\textwidth}
    \centering
    \caption{Extra Generation}
    \label{tab:extra gen}
    \resizebox{\textwidth}{!}{%
    \begin{tabular}{lc c c c c}
    \toprule
    Method & 1000 & 2000 & 3000 & 4000 & 5000 \\ \midrule
    Gen \textbar L\textbar / Sample \textbar L\textbar & 0.6460 & 0.7551 & 0.8245 & \cellcolor{green!20}\textbf{0.8594} & 0.8763 \\
    Gen 2\textbar L\textbar / Sample \textbar L\textbar & \cellcolor{green!20}\textbf{0.6504} & \cellcolor{green!20}\textbf{0.7563} & \cellcolor{green!20}\textbf{0.8279} & 0.8517 & \cellcolor{green!20}\textbf{0.8764} \\
    \bottomrule
    \end{tabular}%
    }
\end{minipage}
\end{table}

\noindent\textbf{Training Scale Comparison}. Since \sys uses additional synthesized datasets during the training, compared to the standard training, \sys may introduce more training iterations, although the annotation budget is the same. This brings potential uncertainty to the validation of the improvement. Therefore, to explicitly quantify the improvement caused by the better data distribution, we train the AL baseline with the same iteration as ours. \autoref{tab:training scale} shows that increasing the training iteration of AL baselines improves the performance slightly, but \sys still drastically outperforms the AL baseline.

\noindent\textbf{Extra Generation}. \sys can generate as many as possible samples if the computational resource allows, therefore, we also evaluate the performance when generating $2|L|$ samples while randomly selecting $|L|$ samples. \autoref{tab:extra gen} shows that generating more samples and randomly selecting half of them can lead to slightly performance improvement.


\begin{wraptable}{r}{0.5\linewidth}
\vspace{-12pt}
\centering
\caption{Different $\sigma_{AL}$ and $\sigma_{GAL}$.}
\label{tab:various acqs}
\resizebox{\linewidth}{!}{
\begin{tabular}{lc c c c c}
\toprule
Strategy         & 1000  & 2000  & 3000  & 4000  & 5000  \\ \hline
MarginSampling   & \cellcolor{green!20}\textbf{0.6460} & 0.7551 & \cellcolor{green!20}\textbf{0.8245} & \cellcolor{green!20}\textbf{0.8594} & 0.8763 \\ 
RandomSampling   & 0.6297 & \cellcolor{green!20}\textbf{0.7615} & 0.8071 & 0.8382 & 0.8580 \\ 
EntropySampling  & 0.6358 & 0.7490 & 0.8212 & 0.8581 & \cellcolor{green!20}\textbf{0.8811} \\ \bottomrule
\end{tabular}}
\vspace{-18pt}
\end{wraptable}

\noindent\textbf{Different $\sigma_{AL}$ and $\sigma_{GAL}$}. We also test various $\sigma_{AL}$ and $\sigma_{GAL}$ on \sys. This is different from the ablation study of varying $\sigma_{GAL}$, where the acquisition function for the AL baseline is fixed on margin sampling. From \autoref{tab:various acqs}, it is observed that when using margin sampling for both the $\sigma_{AL}$ and $\sigma_{GAL}$, the performance is better.

\section{Conclusion}
Our innovative framework GALOT successfully merges AL with T2I synthesis, overcoming traditional AL's data limitations. By optimizing text embeddings to generate informative samples, GALOT broadens data diversity and training efficiency, pioneering the text-to-model approaches. Our evaluations demonstrate its superiority over existing AL methods, marking a significant step forward in developing generative active learning with zero-shot T2I generation. 

\bibliographystyle{splncs04}
\bibliography{main}

\begin{thebibliography}{10}
\providecommand{\url}[1]{\texttt{#1}}
\providecommand{\urlprefix}{URL }
\providecommand{\doi}[1]{https://doi.org/#1}

\bibitem{tensorflow}
Abadi, M., Agarwal, A., Barham, P., Brevdo, E., Chen, Z., Citro, C., Corrado, G.S., Davis, A., Dean, J., Devin, M., et~al.: Tensorflow: Large-scale machine learning on heterogeneous distributed systems. arXiv preprint arXiv:1603.04467  (2016)

\bibitem{MQSfirstwork}
Angluin, D.: Queries and concept learning. Mach. Learn.  \textbf{2}(4),  319--342 (1987). \doi{10.1007/BF00116828}, \url{https://doi.org/10.1007/BF00116828}

\bibitem{BADGE}
Ash, J.T., Zhang, C., Krishnamurthy, A., Langford, J., Agarwal, A.: Deep batch active learning by diverse, uncertain gradient lower bounds. arXiv preprint arXiv:1906.03671  (2019)

\bibitem{LAAL}
Bachman, P., Sordoni, A., Trischler, A.: Learning algorithms for active learning. In: 5th International Conference on Learning Representations, {ICLR} 2017, Toulon, France, April 24-26, 2017, Workshop Track Proceedings. OpenReview.net (2017), \url{https://openreview.net/forum?id=rJj2ZxHtl}

\bibitem{DELL-E-3}
Betker, J., Goh, G., Jing, L., Brook, T., Wang, J., Li, L., Ouyang, L., Zhuang, J., Lee, J., Guo, Y., Manassra, W., Dhariwal, P., Chu, C., Jiao, Y., Ramesh, A.: Improving image generation with better captions. https://cdn.openai.com/papers/dall-e-3.pdf  (2023)

\bibitem{SSSfirstwork}
Cohn, D.A., Atlas, L.E., Ladner, R.E.: Improving generalization with active learning. Mach. Learn.  \textbf{15}(2),  201--221 (1994). \doi{10.1007/BF00993277}, \url{https://doi.org/10.1007/BF00993277}

\bibitem{DiffusionSurvey2023}
Croitoru, F.A., Hondru, V., Ionescu, R.T., Shah, M.: Diffusion models in vision: A survey. IEEE Transactions on Pattern Analysis and Machine Intelligence  (2023)

\bibitem{imagenet}
Deng, J., Dong, W., Socher, R., Li, L.J., Li, K., Fei-Fei, L.: Imagenet: A large-scale hierarchical image database. In: 2009 IEEE conference on computer vision and pattern recognition. pp. 248--255. Ieee (2009)

\bibitem{Cogview}
Ding, M., Yang, Z., Hong, W., Zheng, W., Zhou, C., Yin, D., Lin, J., Zou, X., Shao, Z., Yang, H., Tang, J.: Cogview: Mastering text-to-image generation via transformers. In: Ranzato, M., Beygelzimer, A., Dauphin, Y.N., Liang, P., Vaughan, J.W. (eds.) Advances in Neural Information Processing Systems 34: Annual Conference on Neural Information Processing Systems 2021, NeurIPS 2021, December 6-14, 2021, virtual. pp. 19822--19835 (2021), \url{https://proceedings.neurips.cc/paper/2021/hash/a4d92e2cd541fca87e4620aba658316d-Abstract.html}

\bibitem{PseudoLabel}
Ducoffe, M., Precioso, F.: Adversarial active learning for deep networks: a margin based approach. CoRR  \textbf{abs/1802.09841} (2018), \url{http://arxiv.org/abs/1802.09841}

\bibitem{VarRatio}
Freeman, L.C.: Elementary applied statistics: for students in behavioral science. (No Title)  (1965)

\bibitem{UncertaintyDropout}
Gal, Y., Islam, R., Ghahramani, Z.: Deep bayesian active learning with image data. In: Precup, D., Teh, Y.W. (eds.) Proceedings of the 34th International Conference on Machine Learning, {ICML} 2017, Sydney, NSW, Australia, 6-11 August 2017. Proceedings of Machine Learning Research, vol.~70, pp. 1183--1192. {PMLR} (2017), \url{http://proceedings.mlr.press/v70/gal17a.html}

\bibitem{BALD}
Gal, Y., Islam, R., Ghahramani, Z.: Deep {B}ayesian active learning with image data. In: Proceedings of the 34th International Conference on Machine Learning. PMLR (2017)

\bibitem{GAN}
Goodfellow, I.J., Pouget{-}Abadie, J., Mirza, M., Xu, B., Warde{-}Farley, D., Ozair, S., Courville, A.C., Bengio, Y.: Generative adversarial nets. In: Ghahramani, Z., Welling, M., Cortes, C., Lawrence, N.D., Weinberger, K.Q. (eds.) Advances in Neural Information Processing Systems 27: Annual Conference on Neural Information Processing Systems 2014, December 8-13 2014, Montreal, Quebec, Canada. pp. 2672--2680 (2014), \url{https://proceedings.neurips.cc/paper/2014/hash/5ca3e9b122f61f8f06494c97b1afccf3-Abstract.html}

\bibitem{DDPM}
Ho, J., Jain, A., Abbeel, P.: Denoising diffusion probabilistic models. In: Larochelle, H., Ranzato, M., Hadsell, R., Balcan, M., Lin, H. (eds.) Advances in Neural Information Processing Systems 33: Annual Conference on Neural Information Processing Systems 2020, NeurIPS 2020, December 6-12, 2020, virtual (2020), \url{https://proceedings.neurips.cc/paper/2020/hash/4c5bcfec8584af0d967f1ab10179ca4b-Abstract.html}

\bibitem{huang2017densely}
Huang, G., Liu, Z., Van Der~Maaten, L., Weinberger, K.Q.: Densely connected convolutional networks. In: Proceedings of the IEEE conference on computer vision and pattern recognition. pp. 4700--4708 (2017)

\bibitem{MeanSTD}
Kampffmeyer, M., Salberg, A.B., Jenssen, R.: Semantic segmentation of small objects and modeling of uncertainty in urban remote sensing images using deep convolutional neural networks. In: Proceedings of the IEEE conference on computer vision and pattern recognition workshops. pp.~1--9 (2016)

\bibitem{LADA}
Kim, Y.Y., Song, K., Jang, J., Moon, I.C.: Lada: Look-ahead data acquisition via augmentation for deep active learning. Advances in Neural Information Processing Systems  \textbf{34},  22919--22930 (2021)

\bibitem{CIFAR10}
Krizhevsky, A., Hinton, G., et~al.: Learning multiple layers of features from tiny images  (2009)

\bibitem{tinyimagenet}
Le, Y., Yang, X.: Tiny imagenet visual recognition challenge. CS 231N  \textbf{7}(7), ~3 (2015)

\bibitem{LeastConfidence}
Lewis, D.D., Gale, W.A.: A sequential algorithm for training text classifiers. In: Proceedings of the 17th Annual International {ACM-SIGIR} Conference on Research and Development in Information Retrieval. ACM/Springer (1994)

\bibitem{ControlGAN}
Li, B., Qi, X., Lukasiewicz, T., Torr, P.H.S.: Controllable text-to-image generation. In: Wallach, H.M., Larochelle, H., Beygelzimer, A., d'Alch{\'{e}}{-}Buc, F., Fox, E.B., Garnett, R. (eds.) Advances in Neural Information Processing Systems 32: Annual Conference on Neural Information Processing Systems 2019, NeurIPS 2019, December 8-14, 2019, Vancouver, BC, Canada. pp. 2063--2073 (2019), \url{https://proceedings.neurips.cc/paper/2019/hash/1d72310edc006dadf2190caad5802983-Abstract.html}

\bibitem{FlipDA}
Ma, Y., Lu, S., Xu, E., Yu, T., Zhou, L.: Combining active learning and data augmentation for image classification. In: {ICBDT} 2020: 3rd International Conference on Big Data Technologies, Qingdao, China, September, 2020. pp. 58--62. {ACM} (2020). \doi{10.1145/3422713.3422726}, \url{https://doi.org/10.1145/3422713.3422726}

\bibitem{PGD}
Madry, A., Makelov, A., Schmidt, L., Tsipras, D., Vladu, A.: Towards deep learning models resistant to adversarial attacks. In: International Conference on Learning Representations (2018)

\bibitem{ASAL}
Mayer, C., Timofte, R.: Adversarial sampling for active learning. In: {IEEE} Winter Conference on Applications of Computer Vision, {WACV} 2020, Snowmass Village, CO, USA, March 1-5, 2020. pp. 3060--3068. {IEEE} (2020). \doi{10.1109/WACV45572.2020.9093556}, \url{https://doi.org/10.1109/WACV45572.2020.9093556}

\bibitem{DBLP:conf/icml/Glide}
Nichol, A.Q., Dhariwal, P., Ramesh, A., Shyam, P., Mishkin, P., McGrew, B., Sutskever, I., Chen, M.: {GLIDE:} towards photorealistic image generation and editing with text-guided diffusion models. In: Chaudhuri, K., Jegelka, S., Song, L., Szepesv{\'{a}}ri, C., Niu, G., Sabato, S. (eds.) International Conference on Machine Learning, {ICML} 2022, 17-23 July 2022, Baltimore, Maryland, {USA}. Proceedings of Machine Learning Research, vol.~162, pp. 16784--16804. {PMLR} (2022), \url{https://proceedings.mlr.press/v162/nichol22a.html}

\bibitem{pytorch}
Paszke, A., Gross, S., Massa, F., Lerer, A., Bradbury, J., Chanan, G., Killeen, T., Lin, Z., Gimelshein, N., Antiga, L., et~al.: Pytorch: An imperative style, high-performance deep learning library. Advances in neural information processing systems  \textbf{32} (2019)

\bibitem{DBLP:conf/iclr/Textto3DICLR2023}
Poole, B., Jain, A., Barron, J.T., Mildenhall, B.: Dreamfusion: Text-to-3d using 2d diffusion. In: The Eleventh International Conference on Learning Representations, {ICLR} 2023, Kigali, Rwanda, May 1-5, 2023. OpenReview.net (2023), \url{https://openreview.net/pdf?id=FjNys5c7VyY}

\bibitem{DBLP:conf/icml/TexttospeechICML2021}
Popov, V., Vovk, I., Gogoryan, V., Sadekova, T., Kudinov, M.A.: Grad-tts: {A} diffusion probabilistic model for text-to-speech. In: Meila, M., Zhang, T. (eds.) Proceedings of the 38th International Conference on Machine Learning, {ICML} 2021, 18-24 July 2021, Virtual Event. Proceedings of Machine Learning Research, vol.~139, pp. 8599--8608. {PMLR} (2021), \url{http://proceedings.mlr.press/v139/popov21a.html}

\bibitem{CLIP}
Radford, A., Kim, J.W., Hallacy, C., Ramesh, A., Goh, G., Agarwal, S., Sastry, G., Askell, A., Mishkin, P., Clark, J., et~al.: Learning transferable visual models from natural language supervision. In: International conference on machine learning. pp. 8748--8763. PMLR (2021)

\bibitem{DELL-E-2}
Ramesh, A., Dhariwal, P., Nichol, A., Chu, C., Chen, M.: Hierarchical text-conditional image generation with clip latents. arXiv preprint arXiv:2204.06125  (2022)

\bibitem{DELL-E}
Ramesh, A., Pavlov, M., Goh, G., Gray, S., Voss, C., Radford, A., Chen, M., Sutskever, I.: Zero-shot text-to-image generation. In: Meila, M., Zhang, T. (eds.) Proceedings of the 38th International Conference on Machine Learning, {ICML} 2021, 18-24 July 2021, Virtual Event. Proceedings of Machine Learning Research, vol.~139, pp. 8821--8831. {PMLR} (2021), \url{http://proceedings.mlr.press/v139/ramesh21a.html}

\bibitem{FirstGANT2I}
Reed, S.E., Akata, Z., Yan, X., Logeswaran, L., Schiele, B., Lee, H.: Generative adversarial text to image synthesis. In: Balcan, M., Weinberger, K.Q. (eds.) Proceedings of the 33nd International Conference on Machine Learning, {ICML} 2016, New York City, NY, USA, June 19-24, 2016. {JMLR} Workshop and Conference Proceedings, vol.~48, pp. 1060--1069. JMLR.org (2016), \url{http://proceedings.mlr.press/v48/reed16.html}

\bibitem{StableDiffusion}
Rombach, R., Blattmann, A., Lorenz, D., Esser, P., Ommer, B.: High-resolution image synthesis with latent diffusion models. In: {IEEE/CVF} Conference on Computer Vision and Pattern Recognition, {CVPR} 2022, New Orleans, LA, USA, June 18-24, 2022. pp. 10674--10685. {IEEE} (2022). \doi{10.1109/CVPR52688.2022.01042}, \url{https://doi.org/10.1109/CVPR52688.2022.01042}

\bibitem{Imagen}
Saharia, C., Chan, W., Saxena, S., Li, L., Whang, J., Denton, E.L., Ghasemipour, S.K.S., Lopes, R.G., Ayan, B.K., Salimans, T., Ho, J., Fleet, D.J., Norouzi, M.: Photorealistic text-to-image diffusion models with deep language understanding. In: NeurIPS (2022), \url{http://papers.nips.cc/paper\_files/paper/2022/hash/ec795aeadae0b7d230fa35cbaf04c041-Abstract-Conference.html}

\bibitem{sandler2018mobilenetv2}
Sandler, M., Howard, A., Zhu, M., Zhmoginov, A., Chen, L.C.: Mobilenetv2: Inverted residuals and linear bottlenecks. In: Proceedings of the IEEE conference on computer vision and pattern recognition. pp. 4510--4520 (2018)

\bibitem{MarginSampling}
Scheffer, T., Decomain, C., Wrobel, S.: Active hidden {M}arkov models for information extraction. In: Proceedings of the International Symposium on Intelligent Data Analysis (2001)

\bibitem{CoreSet}
Sener, O., Savarese, S.: Active learning for convolutional neural networks: {A} core-set approach. In: 6th International Conference on Learning Representations, {ICLR} 2018, Vancouver, BC, Canada, April 30 - May 3, 2018, Conference Track Proceedings. OpenReview.net (2018), \url{https://openreview.net/forum?id=H1aIuk-RW}

\bibitem{ALSurvey}
Settles, B.: Active learning literature survey  (2009)

\bibitem{shannon}
Shannon, C.E.: A mathematical theory of communication. ACM SIGMOBILE mobile computing and communications review  \textbf{5}(1),  3--55 (2001)

\bibitem{simonyan2014very}
Simonyan, K., Zisserman, A.: Very deep convolutional networks for large-scale image recognition. arXiv preprint arXiv:1409.1556  (2014)

\bibitem{DPM}
Sohl{-}Dickstein, J., Weiss, E.A., Maheswaranathan, N., Ganguli, S.: Deep unsupervised learning using nonequilibrium thermodynamics. In: Bach, F.R., Blei, D.M. (eds.) Proceedings of the 32nd International Conference on Machine Learning, {ICML} 2015, Lille, France, 6-11 July 2015. {JMLR} Workshop and Conference Proceedings, vol.~37, pp. 2256--2265. JMLR.org (2015), \url{http://proceedings.mlr.press/v37/sohl-dickstein15.html}

\bibitem{effectDA}
Trabucco, B., Doherty, K., Gurinas, M., Salakhutdinov, R.: Effective data augmentation with diffusion models. arXiv preprint arXiv:2302.07944  (2023)

\bibitem{BGAL}
Tran, T., Do, T., Reid, I.D., Carneiro, G.: Bayesian generative active deep learning. In: Chaudhuri, K., Salakhutdinov, R. (eds.) Proceedings of the 36th International Conference on Machine Learning, {ICML} 2019, 9-15 June 2019, Long Beach, California, {USA}. Proceedings of Machine Learning Research, vol.~97, pp. 6295--6304. {PMLR} (2019), \url{http://proceedings.mlr.press/v97/tran19a.html}

\bibitem{CEAL}
Wang, K., Zhang, D., Li, Y., Zhang, R., Lin, L.: Cost-effective active learning for deep image classification. {IEEE} Trans. Circuits Syst. Video Technol.  \textbf{27}(12),  2591--2600 (2017). \doi{10.1109/TCSVT.2016.2589879}, \url{https://doi.org/10.1109/TCSVT.2016.2589879}

\bibitem{Nvwa}
Wu, C., Liang, J., Ji, L., Yang, F., Fang, Y., Jiang, D., Duan, N.: N{\"{u}}wa: Visual synthesis pre-training for neural visual world creation. In: Avidan, S., Brostow, G.J., Ciss{\'{e}}, M., Farinella, G.M., Hassner, T. (eds.) Computer Vision - {ECCV} 2022 - 17th European Conference, Tel Aviv, Israel, October 23-27, 2022, Proceedings, Part {XVI}. Lecture Notes in Computer Science, vol. 13676, pp. 720--736. Springer (2022). \doi{10.1007/978-3-031-19787-1\_41}, \url{https://doi.org/10.1007/978-3-031-19787-1\_41}

\bibitem{AttnGAN}
Xu, T., Zhang, P., Huang, Q., Zhang, H., Gan, Z., Huang, X., He, X.: Attngan: Fine-grained text to image generation with attentional generative adversarial networks. In: 2018 {IEEE} Conference on Computer Vision and Pattern Recognition, {CVPR} 2018, Salt Lake City, UT, USA, June 18-22, 2018. pp. 1316--1324. Computer Vision Foundation / {IEEE} Computer Society (2018). \doi{10.1109/CVPR.2018.00143}, \url{http://openaccess.thecvf.com/content\_cvpr\_2018/html/Xu\_AttnGAN\_Fine-Grained\_Text\_CVPR\_2018\_paper.html}

\bibitem{DiffusionSurvey2022}
Yang, L., Zhang, Z., Song, Y., Hong, S., Xu, R., Zhao, Y., Zhang, W., Cui, B., Yang, M.H.: Diffusion models: A comprehensive survey of methods and applications. ACM Computing Surveys  (2022)

\bibitem{TTIDA}
Yin, Y., Kaddour, J., Zhang, X., Nie, Y., Liu, Z., Kong, L., Liu, Q.: Ttida: Controllable generative data augmentation via text-to-text and text-to-image models. arXiv preprint arXiv:2304.08821  (2023)

\bibitem{LLAL}
Yoo, D., Kweon, I.S.: Learning loss for active learning. In: {IEEE} Conference on Computer Vision and Pattern Recognition, {CVPR} 2019, Long Beach, CA, USA, June 16-20, 2019. pp. 93--102. Computer Vision Foundation / {IEEE} (2019). \doi{10.1109/CVPR.2019.00018}, \url{http://openaccess.thecvf.com/content\_CVPR\_2019/html/Yoo\_Learning\_Loss\_for\_Active\_Learning\_CVPR\_2019\_paper.html}

\bibitem{LPL}
Yoo, D., Kweon, I.S.: Learning loss for active learning. In: Proceedings of the IEEE/CVF conference on computer vision and pattern recognition. pp. 93--102 (2019)

\bibitem{yu2018deep}
Yu, F., Wang, D., Shelhamer, E., Darrell, T.: Deep layer aggregation. In: Proceedings of the IEEE conference on computer vision and pattern recognition. pp. 2403--2412 (2018)

\bibitem{ALsurvey2022}
Zhan, X., Wang, Q., Huang, K.h., Xiong, H., Dou, D., Chan, A.B.: A comparative survey of deep active learning. arXiv preprint arXiv:2203.13450  (2022)

\bibitem{StackGAN}
Zhang, H., Xu, T., Li, H.: Stackgan: Text to photo-realistic image synthesis with stacked generative adversarial networks. In: {IEEE} International Conference on Computer Vision, {ICCV} 2017, Venice, Italy, October 22-29, 2017. pp. 5908--5916. {IEEE} Computer Society (2017). \doi{10.1109/ICCV.2017.629}, \url{https://doi.org/10.1109/ICCV.2017.629}

\bibitem{GAAL}
Zhu, J., Bento, J.: Generative adversarial active learning. CoRR  \textbf{abs/1702.07956} (2017), \url{http://arxiv.org/abs/1702.07956}

\bibitem{ALDS}
Zhu, X., Zhang, P., Lin, X., Shi, Y.: Active learning from data streams. In: Proceedings of the 7th {IEEE} International Conference on Data Mining {(ICDM} 2007), October 28-31, 2007, Omaha, Nebraska, {USA}. pp. 757--762. {IEEE} Computer Society (2007). \doi{10.1109/ICDM.2007.101}, \url{https://doi.org/10.1109/ICDM.2007.101}

\end{thebibliography}

\clearpage
\appendix

\section{Detailed Experimental Settings}

\label{apd: exp setting}
\begin{table*}[!b]
    \centering
    \caption{Detailed Experimental Settings}
    \resizebox{\linewidth}{!}{\begin{tabular}{c|c c c c c c c c c c c}
    \hline
         Experiment & Text Template & $\sigma_{GAL}$ & $\sigma_{AL}$ & $N$ & $\epsilon$ & $\alpha$ & $k$ & $n$ & $B_{AL}$ & $B_{GAL}$ & Total Annotation\\
    \hline
         \autoref{table:whole} & {\footnotesize a realistic photo of a \{label\}} & MarginSampling &MarginSampling & 10 & linear$(0,0.5)$ &$\epsilon/5$ & 6 & 10 &1,000& $|L|$ & 10,000 \\
          \autoref{table:whole} (basic) & {\footnotesize \{label\}} & MarginSampling &MarginSampling & 10 & 0 &$\epsilon/5$ & 6 & 10 &1,000& $|L|$ & 10,000 \\
         \autoref{fig: baselines} & {\footnotesize a realistic photo of a \{label\}} & MarginSampling &MarginSampling & 10 & linear$(0,0.5)$ &$\epsilon/5$ & 6 & 10 &1,000& $|L|$ & 10,000 \\
         \autoref{table:transfer} & {\footnotesize a realistic photo of a \{label\}} & MarginSampling &MarginSampling & 10 & linear$(0,0.5)$ &$\epsilon/5$ & 6 & 10 &1,000& $|L|$ & 10,000 \\
         \autoref{fig:visual_comparisons} & \{label\} & MarginSampling &MarginSampling & 1 &linear$(0,0.5)$ & $\epsilon/5$ & 6 & 10 & $5,000$ & $2,500$ & $5,000$ \\
         \autoref{fig:histogram_labeling} & -- & RandomSampling & None & 10 & -- &$\epsilon/5$ & 6 & 10 & 0 & 100 & 0 \\
        \autoref{tab:text_template_comparison} & -- & MarginSampling &MarginSampling & 5 & linear$(0,0.5)$ &$\epsilon/5$ & 6 & 10 &1,000& $0.5|L|$ & 5,000 \\
         \autoref{tab:epsilon} & a realistic photo of a \{label\} & MarginSampling &MarginSampling & 5 & -- &$\epsilon/5$ & 6 & 10 &1,000& $|L|$ & 5,000 \\
         \autoref{fig:ablation_methods} & \{label\} & -- &MarginSampling & 5 & linear$(0,0.5)$ &$\epsilon/5$ & 6 & 10 &1,000& $0.5|L|$ & 5,000\\
         \autoref{fig:ablation_GAL} &  \{label\} & MarginSampling &MarginSampling & 5 & linear$(0,0.5)$ &$\epsilon/5$ & 6 & 10 &1,000& -- & 5,000\\
        \autoref{tab:training scale} & a realistic photo of a \{label\} & MarginSampling &MarginSampling & 5 & linear$(0,0.5)$ &$\epsilon/5$ & 6 & 10 &1,000& $|L|$ & 5,000\\
         \autoref{tab:extra gen} & a realistic photo of a \{label\} & MarginSampling &MarginSampling & 5 & linear$(0,0.5)$ &$\epsilon/5$ & 6 & 10 &1,000& -- & 5,000\\
         \autoref{tab:various acqs} & a realistic photo of a \{label\} & -- &-- & 5 & linear$(0,0.5)$ &$\epsilon/5$ & 6 & 10 &1,000& $|L|$ & 5,000\\
    \hline
    \end{tabular}}

    \label{tab:exp setting}
\end{table*}

\begin{table*}[!ht]
    \centering
    \caption{Detailed Human Annotation Results (number of accurate samples in 10 generated samples)}

    \resizebox{\linewidth}{!}{\begin{tabular}{c| c c c c c c c c c c c c}
    \toprule
         &Template & airplane & automobile & bird & cat & deer & dog & frog & horse & ship & truck & Total \\
    \hline
                        &``a photo of a \{label \}'' & 10& 10& 10& 10& 10& 10& 10& 10& 10& 10& 100 \\
                        &``\{label \}'' & 10& 8& 9& 6& 10& 10& 10& 10& 5& 10& 88 \\
    $\epsilon=0$        &``a realistic photo of a \{label \}'' & 10& 10& 10& 10& 10& 10& 10& 10& 10& 10& 100 \\
                        &``a simple photo of a \{label \}'' & 10& 10& 10& 10& 10& 10& 10& 10& 10& 10& 100 \\
                        &``a recent photograph of a \{label \}'' & 10& 10& 10& 10& 10& 10& 10& 10& 10& 10& 100 \\
                        &{\footnotesize ``a recent photograph of a \{label \}, ultra detailed, ...''} & 10& 10& 10& 10& 10& 10& 10& 10& 10& 10& 100 \\
    \hline
                        &``a photo of a \{label \}'' & 10& 10& 10& 10& 10& 10& 10& 10& 10& 10& 100 \\
                        &``\{label \}'' & 10& 1& 0& 3& 10& 0& 10& 10& 6& 10& 60 \\
    $\epsilon=5$        &``a realistic photo of a \{label \}'' & 10& 10& 10& 10& 10& 7& 9& 10& 9& 10& 95 \\
                        &``a simple photo of a \{label \}'' & 7& 10& 10& 10& 10& 10& 10& 9& 10& 10& 96 \\
                        &``a recent photograph of a \{label \}'' & 10& 10& 10& 10& 10& 10& 10& 10& 10& 10& 100 \\
                        &{\footnotesize ``a recent photograph of a \{label \}, ultra detailed, ...''} & 10& 10& 10& 10& 10& 10& 10& 10& 8& 10& 98 \\
    \hline
                        &``a photo of a \{label \}'' & 9& 9& 10& 1& 10& 10& 6& 10& 10& 10& 85 \\
                        &``\{label \}'' & 8& 1& 3& 3& 2& 2& 10& 10& 0& 10& 49 \\
    $\epsilon=10$        &``a realistic photo of a \{label \}'' & 8& 6& 10& 0& 10& 10& 9& 10& 9& 10& 82 \\
                        &``a simple photo of a \{label \}'' & 9& 4& 2& 10& 4& 10& 10& 6& 9& 10& 74 \\
                        &``a recent photograph of a \{label \}'' &9& 10& 5& 7& 10& 10& 10& 2& 4&  10& 77 \\
                        &{\footnotesize ``a recent photograph of a \{label \}, ultra detailed, ...''}&3 & 1& 10& 10& 10& 9& 9& 9& 9& 10&  80 \\
    \bottomrule
         
    \end{tabular}}
    \label{tab:annotation}
\end{table*}

We present the detailed experimental parameter settings in \autoref{tab:exp setting}, where $\epsilon$ denotes the constrain of regularization norm for text embedding deviation, $\alpha$ denotes the step size for updating the text embedding, $k$ denotes the number of sampled images in computing the gradients during updating text embeddings, $n$ denotes the step for updating the text embedding, $B_{GAL}$ denotes the sampling number in GAL, and $B_{AL}$ denotes the sampling number in AL.

In \autoref{table:whole}, GAAL is originally limited to binary classification on MNIST and CIFAR10 with SVM, we extend it to multi-class classification.

In Section \ref{sec: joint sampling}, for each dataset, the AL baseline selects $1,000$ samples per cycle from the training dataset using Margin Sampling. The GAL baseline generates $|L|$ samples per cycle using Margin Sampling, e.g., $1,000$ samples for the first cycle, $2,000$ samples for the second cycle, ..., and $10,000$ samples for the $10$-th cycle. The AL+GAL baseline selects $1,000$ samples and then generates $|L|$ samples per cycle using Margin Sampling. We run $10$ cycles for these baselines. In total, we have $10,000$ samples for the AL and GAL baselines and $20,000$ samples for the AL+GAL baseline. The fully-supervised baseline takes the whole training set of each dataset as training data.

In Section \ref{sec: Dataset Reuse}, we use the generated data set from Section \ref{sec: main exp} based on ResNet18 as the generated data set for other networks. Specifically, for VGG16, DenseNet121, MobileNetV2, and DLA, we run 10 cycles of active learning with the pre-generated data set. In each cycle, $1,000$ samples are selected from the original training set of CIFAR10, and $|L|$ pre-generated samples are also added to the training data.

In Section \ref{sec:Performance of Pseudo-labeling via Human Annotation}, we use random sampling to randomly optimize the text embedding with different constraints of $\epsilon$. For each setting, we generate 10 images for each class in CIFAR10 and then evaluate the accuracy of the generation. The detailed results for each class are also presented in \autoref{tab:annotation}

In Section \ref{sec: abl exp}, we evaluate the effectiveness of different components of GALOT using the default hyper-parameters as in \autoref{tab:exp setting}. In the text template comparison, the text template is changed across different experiments. In the ablation study of $\epsilon$, the $\epsilon$ is set to different values to test the effectiveness of the text embedding optimization.  In the ablation study of various $B_{GAL}$, we change the generation number w.r.t. the cumulative labeled number $|L|$, by a factor of $0.5$, $1.0$, and $1.5$. 

Other experiment settings are also summarized in \autoref{tab:exp setting}. The uncertainty presented in \autoref{tab:text_template_comparison}, \autoref{fig:ablation_methods}, and \autoref{fig:ablation_GAL} is the Standard Error of the Mean (SEM) obtained over 2 runs.

\begin{figure}[!ht]
    \centering
    \includegraphics[width=\linewidth]{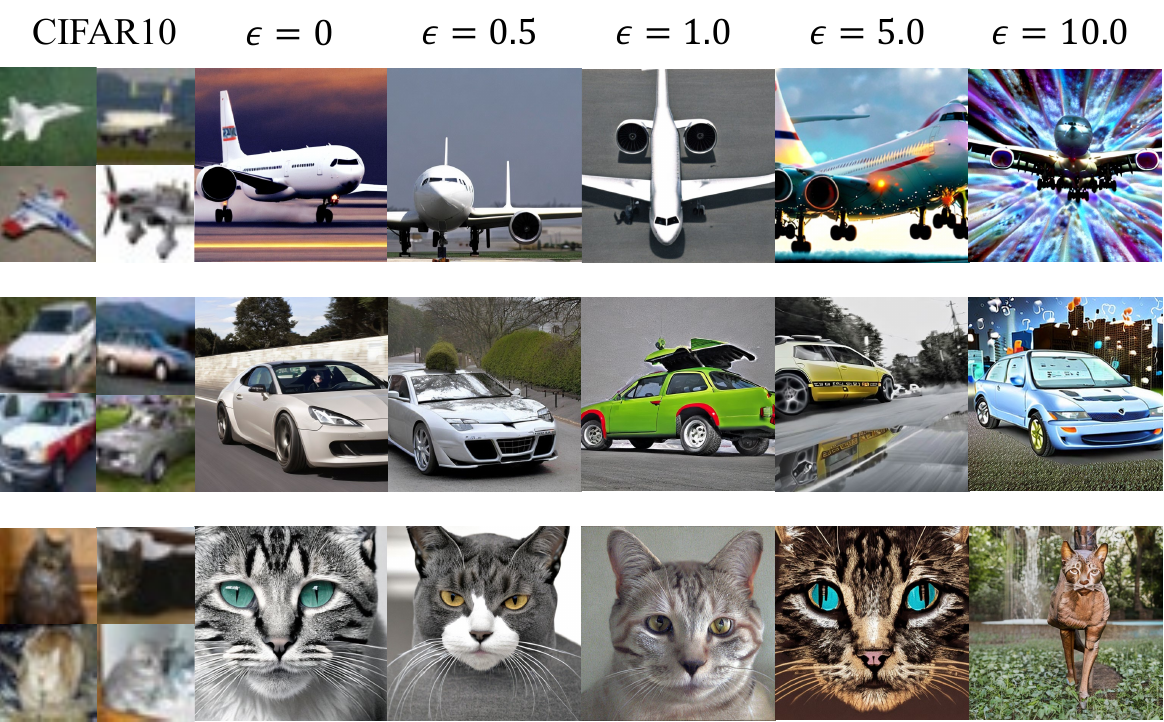}
    \caption{Visual Comparison of Generated Images VS. Real Images on CIFAR10.}
    \label{fig:visual_comparisons}
\end{figure}

\section{T2I Generation Examples}
\label{sec:visual examples}

\begin{figure}[!ht]
    \centering
    \includegraphics[width=\linewidth]{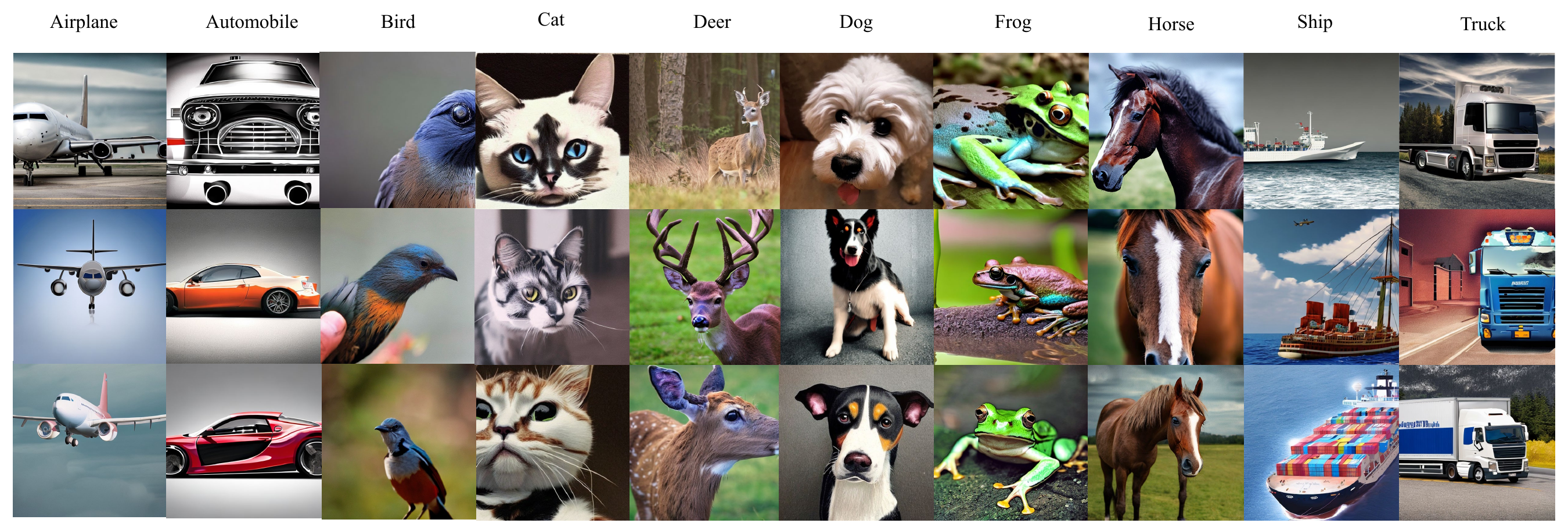}
    \caption{Generated Images for CIFAR10 Classification}
    \label{fig:example1}
\end{figure}

\subsubsection{Visual Comparisons}
\label{sec:Visual Comparisons}
We first visually evaluate the quality of the zero-shot text-to-image generation. The images are generated by text-to-image models solely using the label name as input.  \autoref{fig:visual_comparisons} provides a visual comparison between the generated images from our model and the real images across \(\epsilon\) settings on CIFAR10 task. We observe that the generated images can generally match their labels, while as the $\epsilon$ increases, the visual representation deviates. Therefore, how to set up the $\epsilon$ range in active learning would be a problem since larger $\epsilon$ may provide a more diverse generation but can deviate from its class or even lead to a crashed generation. In Section \ref{sec: abl exp}, we provide some quantitative evaluation on the setting of $\epsilon$.

\begin{figure}[!ht]
    \centering
    \includegraphics[width=\linewidth]{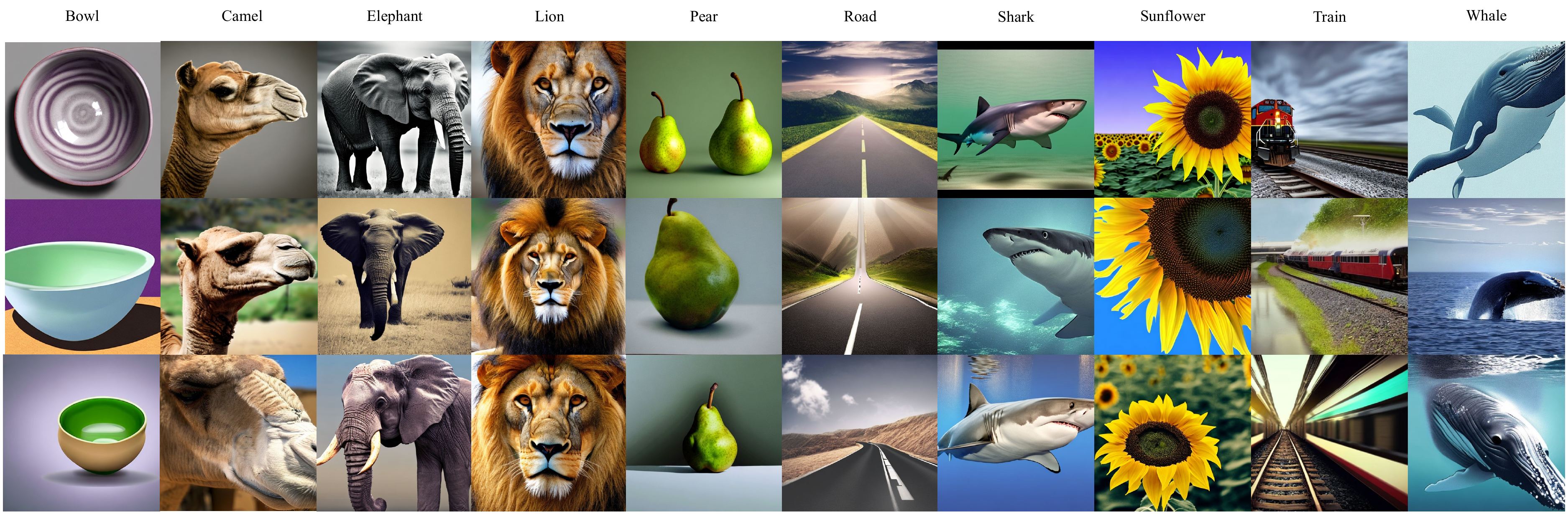}
    \caption{Generated Images for CIFAR100 Classification}
    \label{fig:example2}
\end{figure}

\begin{figure}[!ht]
    \centering
    \includegraphics[width=\linewidth]{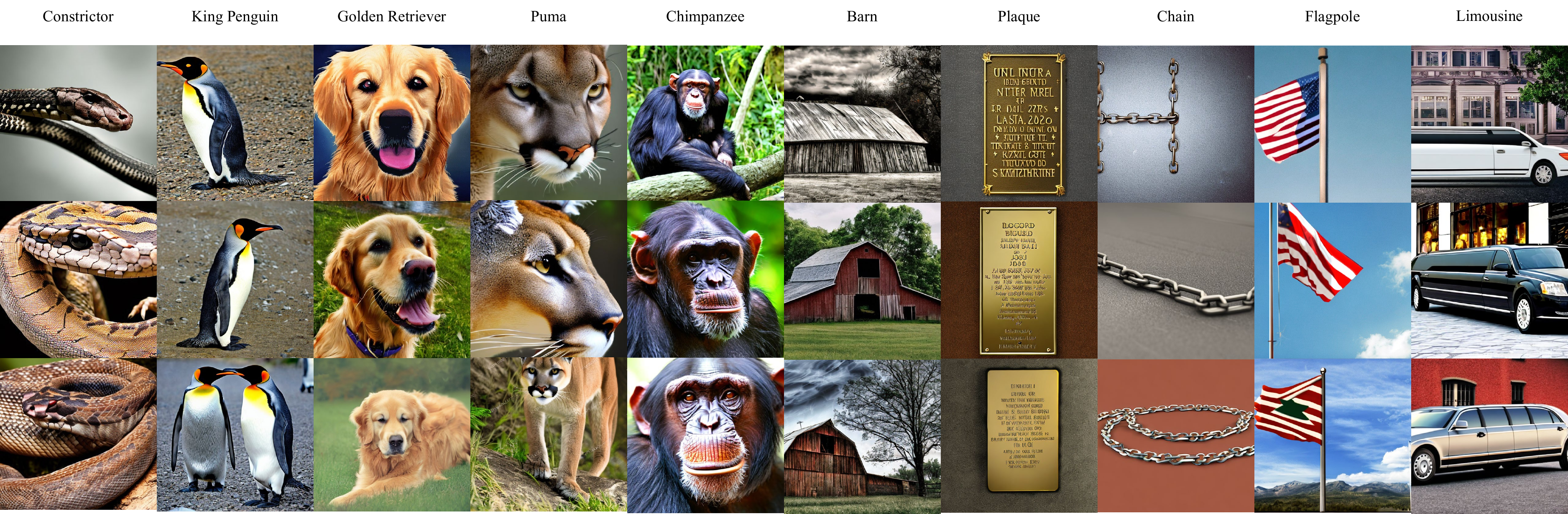}
    \caption{Generated Images for TinyImageNet Classification}
    \label{fig:example3}
\end{figure}

To further evaluate the quality of the T2I generation, we present some examples of the generated images for different datasets in the last cycle of active learning in Section \ref{sec: main exp}. We randomly select 3 images from 10 random classes of CIFAR10, CIFAR100, and TinyImageNet, presented in \autoref{fig:example1}, \autoref{fig:example2}, and \autoref{fig:example3}, respectively.

\end{document}